\def\th@plain{%
	\thm@notefont{}
	\normalfont 
}
\def\th@definition{%
	\thm@notefont{}
	\normalfont 
}
\newcommand{\sig}[1]{\textsf{{#1}}}
\newtheorem{lemma}{Lemma}
\newtheorem{defn}{Definition}
\title{\LARGE \bf
Towards Robust Direct Perception Networks for Automated Driving
}
\author{Chih-Hong Cheng
\thanks{Chih-Hong Cheng is with Corporate R\&D, DENSO AUTOMOTIVE Deutschland GmbH.
        {\tt\small c.cheng@denso-auto.de}}%
}
\begin{document}

\maketitle
\thispagestyle{empty}
\pagestyle{empty}

\begin{abstract}

We consider the problem of engineering \emph{robust} direct perception  neural networks with  output being regression. Such networks take high dimensional input image data, and they produce affordances such as the curvature of the upcoming road segment or the distance to the front vehicle. Our proposal starts by allowing a neural network prediction to deviate from the label with  tolerance~$\Delta$. The source of tolerance can be either contractual or from limiting factors where two entities may label the same data with slightly different numerical values. The tolerance motivates the use of a non-standard loss function where the loss is set to~$0$ so long as the prediction-to-label distance is less than~$\Delta$. We further extend the loss function and define a new \emph{provably robust} criterion that is parametric to the allowed output tolerance~$\Delta$, the layer index~$\tilde{l}$ where perturbation is considered, and the maximum perturbation amount~$\kappa$. During training, the robust loss is computed by first propagating symbolic errors from the $\tilde{l}$-th layer (with quantity bounded by $\kappa$) to the output layer, followed by computing the overflow between the error bounds and the allowed tolerance. The overall concept is experimented in engineering a direct perception neural network for understanding the central position of the ego-lane in pixel coordinates.

\end{abstract}

\section{Introduction}\label{sec:intro}

Deep neural networks (DNNs) are increasingly used in the automotive industry for realizing perception functions in automated driving. The safety-critical nature of automated driving requires the deployed DNNs to be highly dependable. Among many dependability attributes we consider the \emph{robustness} criterion, which intuitively requires a neural network to produce  similar output values under similar inputs. It is known that DNNs trained under standard approaches can be difficult to exhibit robustness. For example, by imposing carefully crafted tiny noise on an input data point, the newly generated data point may enable a DNN to produce results that completely deviate from the originally expected output. 

In this paper, we study the robustness problem for \emph{direct perception} networks in automated driving. The concept of direct perception neural networks refers to learning affordances (high-level sensory features)~\cite{pomerleau1989alvinn,chen2015deepdriving,sauer2018conditional} such as distance to lane markings or distance to the front vehicles, directly from high-dimensional sensory inputs. 
In contrast to classification where the output criterion for robustness is merely the \emph{sameness} of the output label for input data under perturbation, direct perception commonly uses \emph{regression} output. For practical systems, it can be unrealistic to assume that trained neural networks can produce output regression that perfectly matches the numerical values as specified in the labels. 

Towards this issue, our proposal is to define \emph{tolerance}~$\Delta$ that explicitly regulates the allowed output deviation from labels. Pragmatically, the source of tolerance arises from two aspects, namely (i) the quality contract between car makers and their suppliers, or (ii) the inherent uncertainty in the manual labelling process\footnote{Our very preliminary experiments demonstrated that, when trying to label the center of the ego lane on the same image having~$1280$ pixels in width, a labelling deviation of~$10$ pixels for two consecutive trials is very common, especially when the labelling decision needs to be conducted within a short period of time.}. We subsequently define a loss function that integrates the tolerance - the prediction error is set to be~$0$ so long as the prediction falls within the tolerance bound. Thus the training intuitively emphasizes reducing the \emph{worst case} (i.e., to bring the prediction back to the tolerance). This is in contract to the use of standard loss functions (such as mean-squared-error) where the goal is to bring every prediction to be close to the label. 

As robustness requires that input data being perturbed should produce  results similar to input data without perturbation, $\Delta$ can naturally be overloaded to define the ``sameness" of the regression output under perturbation. Based on this concept, we further propose a new  criterion for \emph{provable robustness}~\cite{kolter2017provable,sinha2017certifiable,wang2018mixtrain,raghunathan2018certified,wong2018scaling,tsuzuku2018lipschitz,salman2019provably} tailored for regression, which is parametric to the allowed output tolerance~$\Delta$, the layer index~$\tilde{l}$ where perturbation is considered, and the maximum perturbation amount~$\kappa$. The robust criterion requires that for any data point in the training set, by applying any feature-level perturbation on the $l$-th layer with quantity less than~$\kappa$, the computation of the DNN only leads to slight output deviation (bounded by~$\Delta$) from the associated ground truth. Importantly, the introduction of parameter~$\tilde{l}$ overcomes scalability and precision issues, while it also implicitly provides capabilities to capture global input transformations (cf. Section~\ref{sec.related} for a detailed comparison to existing work). By carefully defining the loss function as the \emph{interval overflow} between (i) the computed error bounds due to perturbation and (ii) the allowed tolerance interval, the loss can be efficiently computed by summing the overflow of two end-points in the propagated symbolic interval.

To evaluate our proposed approach, we have trained a direct perception network with labels created from publicly accessible datasets. The network takes input from road images and produces affordances such as the central position of the ego-lane in pixel coordinates. The positive result of our preliminary experiment demonstrates the potential for further applying the technology in other automated driving tasks that use DNNs.

\vspace{2mm}
\noindent{\textbf{(Structure of the Paper)}} 
The rest of the paper is structured as follows. Section~\ref{sec.preliminaries} starts with basic formulations of neural networks and describes the tolerance concept. It subsequently details how the error between predictions and labels, while considering tolerance, can be implemented with GPU support. Section~\ref{sec.robust.training} extends the concept of tolerance for provable robustness by considering feature-level perturbation. Section~\ref{sec.evaluation} details our initial experiment in a highway vision-based perception system. Finally, we outline related work in Section~\ref{sec.related} and conclude the paper in Section~\ref{sec.concluding.remarks} with future directions.

\section{Neural Network and Tolerance}~\label{sec.preliminaries}

A \emph{neural network} $\mathcal{N}$ is comprised of~$L$ layers
where operationally, the $l$-th layer for $l\in\{1,\dots,L\}$ of the network is a function $g^{(l)}: \mathbb{R}^{d_{l-1}} \rightarrow \mathbb{R}^{d_{l}}$, with $d_{l}$ being the dimension of layer~$l$.  
Given an input data point $\sig{in} \in \mathbb{R}^{d_{0}}$, the output of the $l$-th layer of the neural network $f^{(l)}$ is given by the functional composition of the $l$-th layer and previous layers $f^{(l)}(\sig{in}) := \circ_{i=1}^{(l)} g^{(i)}(\sig{in})  = g^{(l)}(\ldots g^{(2)}(g^{(1)}(\sig{in})))$. $f^{(L)}(\sig{in})$ is the \emph{prediction} of the network under input data point~$\sig{in}$. Throughout this paper, we use subscripts to extract an element in a vector, e.g., use $\sig{in}_j$ to denote the $j$-th value of~$\sig{in}$.

Given a neural network $\mathcal{N}$  following above definitions, let $\mathcal{D}_{train} := \{ (\sig{in}, \sig{lb})\}$ be the \emph{training data set}, with each data point $\sig{in} \in \mathbb{R}^{d_{0}}$ having its associated label~$\sig{lb} \in \mathbb{R}^{d_{L}}$. Let $(\Delta_1,\dots ,\Delta_{d_{L}})$, where  $\forall j \in \{1, \ldots, d_{L}\}: \Delta_j \geq 0$, be the output \emph{tolerance}. We integrate tolerance to define the error between a prediction $f^{(L)}(\sig{in})$ of the neural network and the label~$\sig{lb}$. Precisely, for output index $j \in \{1, \ldots, d_{L}\}$,

\begin{equation} e^{\Delta}_{j}(f^{(L)}_j(\sig{in}), \sig{lb}_j) := \begin{cases}
0 \;\;\;\;\;\;\;\;\;\;\;\;\;\;\;\;\;\;\;\;\;\;  \text{if} \; |f^{(L)}_j(\sig{in}) - \sig{lb}_j| \leq \Delta_j  \\
\sig{min}( |f^{(L)}_j(\sig{in}) - (\sig{lb}_j - \Delta_j)|, \\\;\;\;\; \;\;\;\; |f^{(L)}_j(\sig{in}) - (\sig{lb}_j + \Delta_j)|) \;\; \text{otherwise}
\end{cases} 
\end{equation}

\begin{figure}
	\includegraphics[width=\columnwidth]{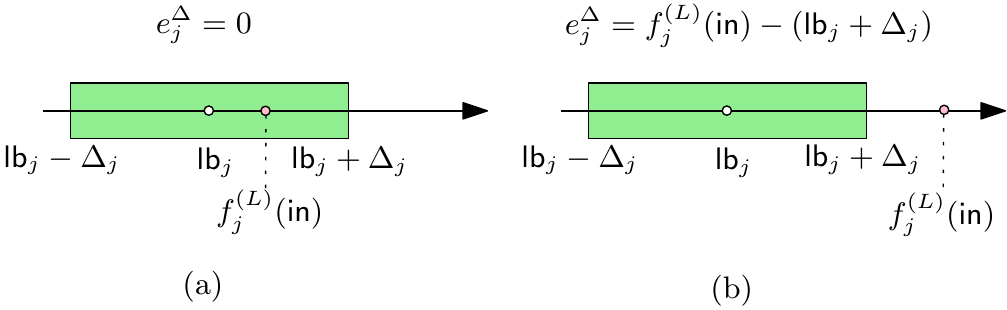}
	\caption{Illustrating the error defined by $\Delta_j$.}
	\label{fig:loss}
\end{figure}

\vspace{2mm}
Figure~\ref{fig:loss} illustrates the intuition of such an error definition. We consider a prediction to be correct (i.e., error to be~$0$) when the prediction is within interval $[\sig{lb}_j - \Delta_j, \sig{lb}_j + \Delta_j]$ (Figure~\ref{fig:loss}-a). Otherwise, the error is the distance to the boundary of the interval (Figure~\ref{fig:loss}-b). Finally, we define the \emph{in-sample error} to be the sum of squared error for each output dimension, for each data point in the training set.

\vspace{1mm}
\begin{defn}[Interval tolerance loss]\label{def.interval.error} 
Define the error in the training set (i.e., in-sample error) to be
	$
	E^{\Delta}_{train} := \frac{1}{N}\sum_{(\sig{in}, \sig{lb}) \in \mathcal{D}_{train}} (E(\sig{in}, \sig{lb}))
	$, where  $N = |\mathcal{D}_{train}|$ and $E(\sig{in}, \sig{lb}) = \sum^{d_{L}}_{j=1} (e^{\Delta}_{j}(f^{(L)}_j(\sig{in}), \sig{lb}_j))^{2}$ .
\end{defn}
\vspace{1mm}

One may observe that the loss function defined above is designed as an extension of the mean-squared-error (MSE) loss function. 

\vspace{1mm}
\begin{lemma}\label{lemma.equi.to.mse} When $\forall j \in \{1, \ldots, L\}:\Delta_j = 0$, $E^{\Delta}_{train}$ is equal to the mean-squared-error loss function $\frac{1}{N}\sum_{(\sig{in}, \sig{lb}) \in \mathcal{D}_{train}} ||f^{(L)}(\sig{in}) - \sig{lb}\,||^2$.  
\end{lemma}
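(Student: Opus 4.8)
The plan is a direct substitution of $\Delta_j = 0$ into the per-dimension error $e^{\Delta}_{j}$, followed by a term-by-term matching of the two sums. The only point that requires any care is showing that the piecewise definition of $e^{\Delta}_{j}$ collapses, on \emph{both} of its branches, to the single expression $|f^{(L)}_j(\sig{in}) - \sig{lb}_j|$; once this is established the rest is a definitional unfolding.

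First I would fix an arbitrary data point $(\sig{in}, \sig{lb}) \in \mathcal{D}_{train}$ and an output index $j$, and substitute $\Delta_j = 0$ into the guard of the first case. The condition $|f^{(L)}_j(\sig{in}) - \sig{lb}_j| \leq \Delta_j$ becomes $|f^{(L)}_j(\sig{in}) - \sig{lb}_j| \leq 0$, i.e. $f^{(L)}_j(\sig{in}) = \sig{lb}_j$, and here $e^{\Delta}_{j} = 0 = |f^{(L)}_j(\sig{in}) - \sig{lb}_j|$. In the complementary (\emph{otherwise}) case, substituting $\Delta_j = 0$ makes the two candidate endpoints $\sig{lb}_j - \Delta_j$ and $\sig{lb}_j + \Delta_j$ coincide with $\sig{lb}_j$, so both arguments of the $\sig{min}$ reduce to $|f^{(L)}_j(\sig{in}) - \sig{lb}_j|$ and the minimum of the two identical quantities is again $|f^{(L)}_j(\sig{in}) - \sig{lb}_j|$. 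Hence on either branch $e^{\Delta}_{j}(f^{(L)}_j(\sig{in}), \sig{lb}_j) = |f^{(L)}_j(\sig{in}) - \sig{lb}_j|$.

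Squaring this identity and using $|x|^2 = x^2$ gives $(e^{\Delta}_{j})^2 = (f^{(L)}_j(\sig{in}) - \sig{lb}_j)^2$. Summing over $j \in \{1,\dots,d_{L}\}$ then yields $E(\sig{in}, \sig{lb}) = \sum_{j=1}^{d_{L}} (f^{(L)}_j(\sig{in}) - \sig{lb}_j)^2 = \|f^{(L)}(\sig{in}) - \sig{lb}\,\|^2$ by the definition of the Euclidean norm, and substituting this into $E^{\Delta}_{train}$ reproduces the claimed MSE expression $\frac{1}{N}\sum_{(\sig{in},\sig{lb})} \|f^{(L)}(\sig{in}) - \sig{lb}\,\|^2$. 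There is no substantive obstacle here; the statement is essentially a sanity check that the tolerance loss generalizes MSE. If anything, the only place worth spelling out explicitly is the \emph{otherwise} branch, since a reader might expect the $\sig{min}$ to do something nontrivial, whereas with $\Delta_j = 0$ it is forced to collapse. (A minor note: the hypothesis quantifies over $\{1,\dots,L\}$ but should read $\{1,\dots,d_{L}\}$ to match the output dimension; this does not affect the argument.)
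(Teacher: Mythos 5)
Your proof is correct and follows essentially the same route as the paper's: substitute $\Delta_j = 0$, observe that $e^{\Delta}_{j}$ collapses to $|f^{(L)}_j(\sig{in}) - \sig{lb}_j|$, and unfold the definitions to recover the squared L2-norm; you merely spell out the two branches of the piecewise definition more explicitly than the paper does, and your side remark that the quantifier should range over $\{1,\dots,d_{L}\}$ rather than $\{1,\dots,L\}$ is a fair catch of a typo that does not affect the argument.
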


\begin{proof}
	By setting $\Delta_j = 0$, $e^{\Delta}_{j}(f^{(L)}_j(\sig{in}), \sig{lb}_j)$ can be simplified to $|f^{(L)}_j(\sig{in}) - \sig{lb}_j|$. Thus, in Definition~\ref{def.interval.error}, $E(\sig{in}, \sig{lb})$ is simplified to $\sum^{d_{L}}_{j=1} |f^{(L)}_j(\sig{in}) - \sig{lb}_j|^{2}$, which is equivalent to computing the square of the L2-norm $||f^{(L)}(\sig{in}) - \sig{lb}\,||^2$.
\end{proof}

\vspace{2mm}
\noindent \textbf{(Implementing the loss function with GPU support)} For commonly seen machine learning infrastructures such as TensorFlow\footnote{Google TensorFlow: \url{http://www.tensorflow.org}} or PyTorch\footnote{Facebook PyTorch: \url{https://www.pytorch.org}},  for training a neural network that uses standard layers such as ReLU~\cite{nair2010rectified}, ELU~\cite{clevert2015fast}, Leaky ReLU~\cite{maas2013rectifier} as well as convolution, one only needs to manually implement the customized loss function, while back propagation capabilities for parameter updates are automatically  created by the  infrastructure. In the following, we demonstrate a rewriting of $e^{\Delta}_j()$ such that it uses built-in primitives supported by TensorFlow. Such a rewriting makes it possible for the training to utilize GPU parallelization.

\vspace{1mm}
\begin{lemma}[Error function using GPU function primitives]\label{lemma.gpu.primitive}
	Let $\sig{clip}_{\geq 0}(x)$ be a function that returns~$x$ if $x \geq 0$; otherwise it returns~$0$. 
	Define $\hat{e}^{\Delta}_{j}(f^{(L)}_j(\sig{in}), \sig{lb}_j)$ to be
	$ \sig{max}(\sig{clip}_{\geq 0}((\sig{lb}_j - \Delta_j) - f^{(L)}_j(\sig{in})),   \sig{clip}_{\geq 0}(f^{(L)}_j(\sig{in}) - (\sig{lb}_j + \Delta_j)))$. Then $e^{\Delta}_{j}(f^{(L)}_j(\sig{in}), \sig{lb}_j) = \hat{e}^{\Delta}_{j}(f^{(L)}_j(\sig{in}), \sig{lb}_j)$.  
\end{lemma}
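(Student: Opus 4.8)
The plan is to prove the identity pointwise by a case analysis on the location of the prediction $p := f^{(L)}_j(\sig{in})$ relative to the tolerance interval $[\sig{lb}_j - \Delta_j,\, \sig{lb}_j + \Delta_j]$. Writing $\ell := \sig{lb}_j$ and $\delta := \Delta_j \geq 0$ for brevity, the real line splits into three regions: $p < \ell - \delta$ (below the interval), $\ell - \delta \leq p \leq \ell + \delta$ (inside), and $p > \ell + \delta$ (above). In each region I would evaluate both $e^{\Delta}_j$ and $\hat{e}^{\Delta}_j$ explicitly and check that they coincide.

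The one structural observation driving the argument is that, in the ``otherwise'' branch of $e^{\Delta}_j$, the two absolute values $|p - (\ell - \delta)|$ and $|p - (\ell + \delta)|$ resolve with a definite sign, because $p$ lies strictly outside $[\ell - \delta, \ell + \delta]$. When $p < \ell - \delta$ both endpoints lie to the right of $p$, so the distances are $(\ell - \delta) - p$ and $(\ell + \delta) - p$; since $\delta \geq 0$, the minimum is the nearer (left) endpoint, namely $(\ell - \delta) - p$. Symmetrically, when $p > \ell + \delta$ the minimum is $p - (\ell + \delta)$. On the $\hat{e}^{\Delta}_j$ side, exactly one of the two clip terms is positive in each outside region (the left argument $(\ell - \delta) - p$ when $p$ is below, the right argument $p - (\ell + \delta)$ when $p$ is above) while the other is clipped to $0$; the outer $\max$ therefore selects precisely the active term, matching the value just computed for $e^{\Delta}_j$. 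Inside the interval both clip arguments are nonpositive, so $\hat{e}^{\Delta}_j = \max(0,0) = 0 = e^{\Delta}_j$.

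This case analysis is essentially bookkeeping, and the only point I expect to require genuine care is consistency at the two boundaries $p = \ell - \delta$ and $p = \ell + \delta$. I would note that at these points $e^{\Delta}_j$ falls into its ``if'' branch (the condition uses $\leq$) and returns $0$, while on the $\hat{e}^{\Delta}_j$ side the corresponding clip argument is exactly $0$ and $\sig{clip}_{\geq 0}(0) = 0$, so the $\max$ is again $0$; hence the two definitions also agree on the boundaries. Since the three regions together with their shared boundaries cover all of $\mathbb{R}$, the equality $e^{\Delta}_j = \hat{e}^{\Delta}_j$ holds for every value of $p$, which completes the proof.
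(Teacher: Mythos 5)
Your proof is correct and follows essentially the same route as the paper: a three-way case analysis on the position of the prediction relative to the tolerance interval $[\sig{lb}_j - \Delta_j,\, \sig{lb}_j + \Delta_j]$, showing in each region that exactly one clip term is active and coincides with the minimum in the ``otherwise'' branch of $e^{\Delta}_j$. The paper only sketches this via worked examples in a figure and invites the reader to ``swap the constants,'' whereas you carry out the symbolic argument in full (including the boundary points), so your version is, if anything, the more complete one.
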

\vspace{2mm}  

\begin{proof}(Sketch)
	It can be intriguing to reason that $\hat{e}^{\Delta}_{j}()$ and $e^{\Delta}_{j}()$  are equivalent functions, i.e., the if-then-else statement in~$e^{\Delta}_{j}()$ is implicitly implemented using the $\sig{clip}_{\geq 0}$ primitive\footnote{The $\sig{clip}_{\geq 0}()$ function is implemented in Google TensorFlow using $\texttt{tf.keras.backend.clip}$.} in~$\hat{e}^{\Delta}_{j}()$. To assist understanding, Figure~\ref{fig:equivalence} provides a simplified proof by enumerating all three possible cases regarding  the relative position of $f^{(L)}_j(\sig{in})$ and the tolerance interval, together with their intermediate computations. Diligent readers can easily swap the constants in Figure~\ref{fig:equivalence} and establish a formal correctness proof. 
\end{proof}

\begin{figure}
	\includegraphics[width=\columnwidth]{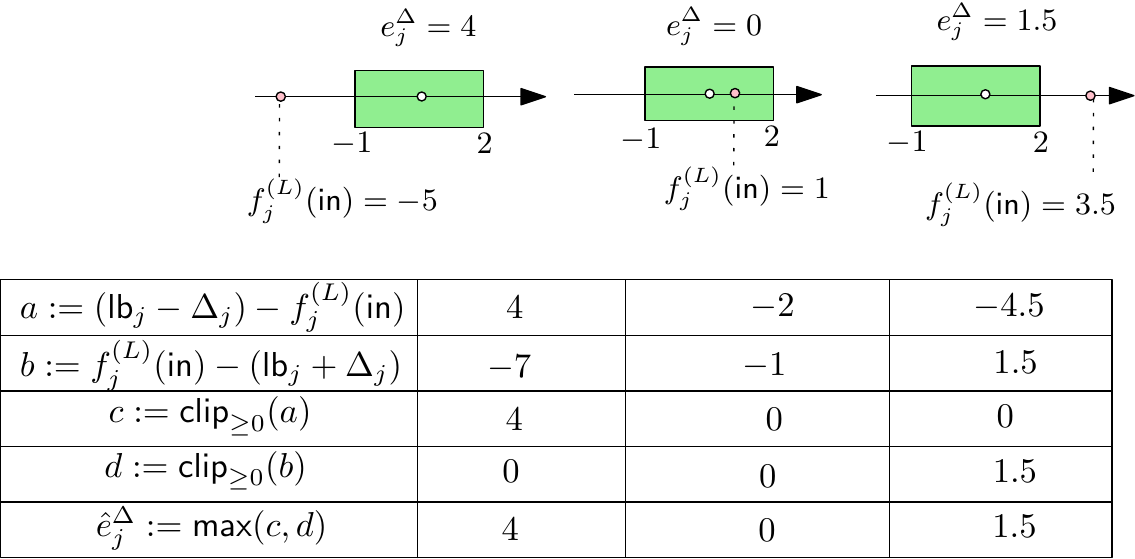}
	\caption{Illustrating the output equivalence  between functions $e^{\Delta}_j()$ and $\hat{e}^{\Delta}_j()$ using examples.}
	\label{fig:equivalence}
\end{figure}  

\section{Provably Robust Training}~\label{sec.robust.training}

This section starts  by outlining the concept of feature-level perturbation, followed by defining symbolic loss. It then defines provably robust training and how training can be made efficient with GPU support. For simplifying notations, in this section let~$v \oplus c$ be an operation that (1) if $c$ is a scalar, adds $c$ to every dimension of a vector~$v$, or (2) if $c$ is a vector, perform element-wise addition. 

\vspace{1mm}
\begin{defn}[Output bound under $(\tilde{l}, \kappa)$-perturbation]\label{def.perturbation}
	Given a neural network $\mathcal{N}$ and an input data point~$\sig{in}$, let 
		$[L, U]$ be the output bound
subject to	$(\tilde{l}, \kappa)$-perturbation. For each output dimension $j \in \{1, \ldots, d_{L}\}$, 	$[L_j, U_j]$, the $j$-th output bound  of the neural network, satisfies the following condition: 
	If $o_j = g^{(L)}_j(g^{(L-1)}(\ldots(g^{(\tilde{l})}(\sig{fv}))))$ where~$\sig{fv} \in [f^{(\tilde{l}-1)}(\sig{in}) \oplus -\kappa, \; f^{(\tilde{l}-1)}(\sig{in}) \oplus \kappa]$, then $o_j \in [L_j, U_j]$. 
	
\end{defn}

\vspace{2mm}

Definition~\ref{def.perturbation} can be understood operationally: first compute $f^{(\tilde{l}-1)}(\sig{in})$ which is the feature vector of~$\sig{in}$ at layer~$\tilde{l}$. Subsequently, try to perturb~$f^{(\tilde{l}-1)}(\sig{in})$ with some noise bounded by $[-\kappa, \kappa]$ in each dimension, in order to create a perturbed feature vector~$\sig{fv}$. Finally, continue with the computation using the perturbed feature vector (i.e., $g^{(L)}(g^{(L-1)}(\ldots(g^{(\tilde{l})}(\sig{fv}))))$), and the computed prediction in the $j$-th dimension should be bounded by~$[L_j, U_j]$. Note that  Definition~\ref{def.perturbation} only requires~$[L_j, U_j]$ to be an over-approximation over the set of all possible predicted values, as the logical implication is not bidirectional. The bound can be computed efficiently with GPU support via approaches such as abstract interpretation with boxed domain (i.e., dataflow analysis~\cite{cousot1977abstract,cheng2017maximum}).

\vspace{1mm}
Given the output bound under $(\tilde{l}, \kappa)$-perturbation, our goal is to define a loss function that computes the overflow of output bounds over the range of tolerant values. 

\vspace{1mm}

\begin{defn}[Symbolic loss]\label{def.symbolic.loss}
	For the $j$-th output of the neural network, for $(\sig{in}, \sig{lb}) \in \mathcal{D}_{train}$, Let~$[L, U]$ be the output bound by feeding the network with~$\sig{in}$ following Definition~\ref{def.perturbation}. Define~$e^{\langle \Delta, \tilde{l}, \kappa \rangle}_{j}([L, U], \sig{lb})$, the \emph{symbolic loss} for the $j$-th output subject to $(\tilde{l}, \kappa)$-perturbation with $\Delta$ tolerance, to be $ \texttt{overflow}([L_j, U_j], [\sig{lb}_j - \Delta_j, \sig{lb}_j + \Delta_j])$. The function  
	$\texttt{overflow}(I_1, I_2)$ equals $\sum^n_{m=1} \texttt{dist}(\frac{1}{2}(L_{\alpha_m} + U_{\alpha_m}), [\sig{lb}_j - \Delta_j, \sig{lb} + \Delta_j])$ where 
	\begin{itemize} 
		\item $[L_{\alpha_1}, U_{\alpha_1}], \ldots, [L_{\alpha_n}, U_{\alpha_n}]$ are maximally disjoint intervals of $I_1 \setminus I_2$. 
		\item  Function $\texttt{dist}(A, [B, C])$ computes the shortest distance between point~$A$ and points in the interval $[B, C]$.
	\end{itemize}

\end{defn}

\vspace{2mm}

The intuition behind the defined symbolic loss is to (i) compute the intervals of the output bound that are outside the tolerance and  subsequently, (ii) consider the loss as the accumulated effort to bring the center of each interval back to the tolerance interval. Figure~\ref{fig:symbolic.loss} illustrates the concept. 

\begin{itemize}
	\item In Figure~\ref{fig:symbolic.loss}-a, as the both the output lower-bound and the upper-bound are contained in the tolerance interval, the loss is set to~$0$.
	\item For Figure~\ref{fig:symbolic.loss}-d, the output bound is $[-2, 5.5]$ while the tolerance interval is $[-1, 4]$. Therefore, there are two maximally disjoint intervals $[-2, -1]$ and $[4, 5.5]$ falling outside the tolerance. The loss is the distance between the center of each interval $[-2, -1]$, $[4, 5.5]$ to the tolerance boundary, which equals $|\frac{1}{2}(5.5 + 4) - 4| + |\frac{1}{2}(-2 + (-1)) - (-1)| = 1.25$. 
	\item Lastly in Figure~\ref{fig:symbolic.loss}-e, the complete interval $[5.5, 9]$ is outside the tolerance interval. The loss is the distance between the center of the interval ($\frac{5.5+9}{2}$) to the boundary, which equals $\frac{5.5+9}{2} -4 = 3.25$. 
	
\end{itemize}

\begin{defn}[Symbolic tolerance loss]\label{def.symbolic.tolerance.error}
	Given a neural network $\mathcal{N}$, define the loss on the training set $\mathcal{D}_{train}$ to be
	$
	E^{(\Delta, \tilde{l},\kappa)}_{train} = \frac{1}{N}\sum_{(\sig{in}, \sig{lb}) \in \mathcal{D}_{train}} (E(\sig{in}, \sig{lb}))
	$, where $E(\sig{in}, \sig{lb}) = \sum^{d_{L}}_{j=1} (e^{(\Delta, \tilde{l}, \kappa)}_{j}([L, U], \sig{lb}))^{2}$, $N = |\mathcal{D}_{train}|$, and $[L, U]$ is computed using Definition~\ref{def.perturbation}.
\end{defn}
\vspace{1mm}

\vspace{2mm}
\noindent \textbf{(Implementing the loss function with GPU support)} The following result states that computing the symbolic loss on the $j$-th output can be done very effectively by averaging the interval loss for $L_j$ and $U_j$, thereby further utilizing the result from Lemma~\ref{lemma.gpu.primitive} for efficient computation via GPU support. 

\vspace{2mm}
\begin{lemma}[Computing symbolic loss by taking end-points]\label{lemma.symbolic.loss}
	For the $j$-th output of the neural network, for $(\sig{in}, \sig{lb}) \in \mathcal{D}_{train}$, the \emph{symbolic loss} subject to  $(\tilde{l}, \kappa)$-perturbation with $\Delta$ tolerance has the following property:
	\[
	e^{\langle \Delta, \tilde{l}, \kappa \rangle}_{j}([L, U], \sig{lb}) = \frac{1}{2} (e^{\Delta}_{j}(L_j, \sig{lb}_j) +  e^{\Delta}_{j}(U_j, \sig{lb}_j))
	\]	
	where $[L, U]$ is computed by feeding the network with $\sig{in}$ using Definition~\ref{def.perturbation}.

\end{lemma}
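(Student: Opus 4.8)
The plan is to reduce the identity to a statement purely about point-to-interval distances and then to show the two sides agree uniformly across the relative positions of the output bound and the tolerance interval. First I would fix notation by writing $T := [\sig{lb}_j - \Delta_j, \sig{lb}_j + \Delta_j]$ for the tolerance interval, with lower endpoint $t_L := \sig{lb}_j - \Delta_j$ and upper endpoint $t_U := \sig{lb}_j + \Delta_j$. The starting observation is that the scalar error $e^{\Delta}_j(x, \sig{lb}_j)$ from Section~\ref{sec.preliminaries} is exactly the point-to-interval distance $\texttt{dist}(x, T)$: it is $0$ when $x \in T$, and equals $\min(|x - t_L|, |x - t_U|)$ otherwise. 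Hence the right-hand side of the lemma is $\frac{1}{2}(\texttt{dist}(L_j, T) + \texttt{dist}(U_j, T))$, and the whole claim becomes an identity about distances to $T$.

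Next I would use that both $I_1 = [L_j, U_j]$ and $I_2 = T$ are single intervals, so $I_1 \setminus I_2$ splits into at most two maximal disjoint sub-intervals: a ``below'' piece $[L_j, t_L]$ present exactly when $L_j < t_L$, and an ``above'' piece $[t_U, U_j]$ present exactly when $U_j > t_U$. The core computational fact I would isolate and prove once is: for any sub-interval $[a, b]$ lying entirely on one side of $T$, the midpoint distance equals the average of the endpoint distances, i.e. $\texttt{dist}(\frac{1}{2}(a + b), T) = \frac{1}{2}(\texttt{dist}(a, T) + \texttt{dist}(b, T))$. This holds because on a one-sided interval the distance to $T$ is affine in the coordinate (it is $t_L - x$ below $T$ and $x - t_U$ above $T$), so the midpoint distance is literally the average of the endpoint distances. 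This is the only place where the midpoint in the definition of $\texttt{overflow}$ enters, and it is precisely what makes the ``take end-points'' formula valid.

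With this fact I would sum over the (at most two) maximal overflow sub-intervals. The below piece $[L_j, t_L]$ contributes $\frac{1}{2}(\texttt{dist}(L_j, T) + \texttt{dist}(t_L, T))$, and since $t_L$ is a boundary of $T$ we have $\texttt{dist}(t_L, T) = 0$; symmetrically the above piece $[t_U, U_j]$ contributes $\frac{1}{2}(\texttt{dist}(t_U, T) + \texttt{dist}(U_j, T))$ with $\texttt{dist}(t_U, T) = 0$. The key point is that when the below piece is absent ($L_j \geq t_L$) we have $\texttt{dist}(L_j, T) = 0$, and when the above piece is absent ($U_j \leq t_U$) we have $\texttt{dist}(U_j, T) = 0$, so the terms that are dropped are exactly the ones that already vanish. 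Adding the surviving contributions therefore yields $\frac{1}{2}(\texttt{dist}(L_j, T) + \texttt{dist}(U_j, T))$ in every situation, which is the claimed right-hand side.

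The main obstacle I expect is not any single computation but the bookkeeping that unifies the positional cases (fully inside $T$, fully below, fully above, straddling both sides, overlapping only the lower end, overlapping only the upper end) into one statement. The clean way around it is the endpoint matching described above: each overflow sub-interval has one endpoint on the boundary of $T$, contributing zero distance, and one endpoint equal to $L_j$ or $U_j$, while any $L_j$ or $U_j$ lying inside $T$ also contributes zero; so the endpoint-distance accounting collapses to $\texttt{dist}(L_j, T)$ and $\texttt{dist}(U_j, T)$ regardless of which pieces are present. I would also dispatch the degenerate boundary situations ($L_j = t_L$, $U_j = t_U$, or $[L_j, U_j] \subseteq T$ giving an empty difference and both sides equal to $0$), which are handled automatically since all the relevant distances are $0$ there.
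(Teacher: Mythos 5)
Your proof is correct, and it supplies the formal argument that the paper's own proof only gestures at: the paper's proof is a sketch that enumerates the six relative positions of $[L_j,U_j]$ and the tolerance interval pictorially (Figure~\ref{fig:symbolic.loss}) and invites the reader to ``swap constants,'' whereas you collapse the case analysis with a single structural observation. Your key lemma --- that $e^{\Delta}_j(x,\sig{lb}_j)$ is exactly the point-to-interval distance to $T=[\sig{lb}_j-\Delta_j,\sig{lb}_j+\Delta_j]$, and that this distance is affine on each side of $T$, so the midpoint distance over any one-sided subinterval equals the average of its endpoint distances --- is precisely the reason the ``take end-points'' formula works, and it is the content the figure illustrates without stating. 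What your route buys is a proof that is uniform in the cases and makes clear which feature of the \texttt{overflow} definition (the midpoint) is load-bearing; what the paper's route buys is immediacy for the reader at the cost of rigor. One small imprecision to fix: you describe the pieces of $I_1\setminus I_2$ as $[L_j,t_L]$ and $[t_U,U_j]$ and assert that every overflow sub-interval has one endpoint on the boundary of $T$, but when $[L_j,U_j]$ lies entirely below (or above) $T$ the single piece is $[L_j,U_j]$ itself and neither endpoint touches $T$. This does not break the argument --- your affine-distance fact applied to $[L_j,U_j]$ directly gives the contribution $\tfrac{1}{2}(\texttt{dist}(L_j,T)+\texttt{dist}(U_j,T))$, which is the claimed right-hand side --- but the decomposition should be stated as $[L_j,\min(U_j,t_L)]$ and $[\max(L_j,t_U),U_j]$ (or the two fully-outside cases should be handled by the core fact explicitly) so that the endpoint-matching bookkeeping is literally true in all six configurations.
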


\vspace{1mm}
\begin{proof}(Sketch)
	Here for simplicity, we illustrate in Figure~\ref{fig:symbolic.loss} all~$6$ possible cases concerning the relative position between interval $[L_j, U_j]$ and interval $[\sig{lb}_j - \Delta_j, \sig{lb}_j + \Delta_j]$. For each case,   results of computing $\frac{1}{2} (e^{\Delta}_{j}(L_j, \sig{lb}_j) +  e^{\Delta}_{j}(U_j, \sig{lb}_j))$ are shown directly in Figure~\ref{fig:symbolic.loss}. Readers can easily swap the constants in Figure~\ref{fig:symbolic.loss} to create a formal correctness proof. 
\end{proof}

\begin{figure}
	\includegraphics[width=0.85\columnwidth]{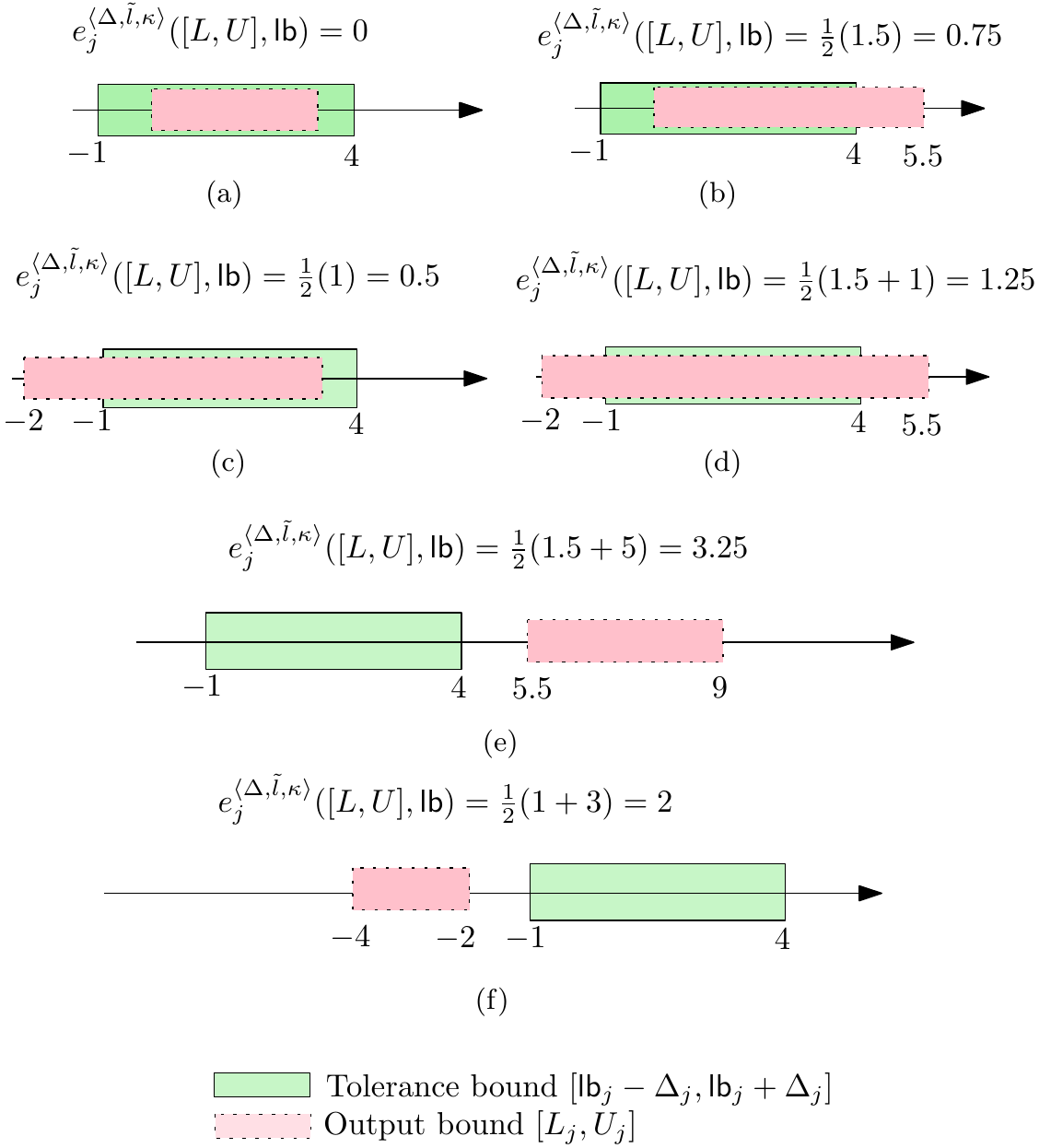}
	\centering
	\caption{Computing symbolic interval loss}
	\label{fig:symbolic.loss}
\end{figure}  

\vspace{1mm}

One immediate observation of Lemma~\ref{lemma.symbolic.loss} is that when the maximum allowed perturbation~$\kappa$ equals~$0$, no feature perturbation appears, and the output bound can be as tight as a single point~$f^{(L)}(\sig{in})$. Therefore, one has $L = U = f^{(L)}(\sig{in})$ and it enables the following simplification. 

\vspace{2mm}
\begin{lemma}\label{lemma.simplification.again}
	When $\kappa = 0$, values computed using symbolic loss can be the same as values computed from interval loss, i.e., $e^{\langle \Delta, \tilde{l}, \kappa \rangle}_{j}([L, U], \sig{lb}) = e^{\Delta}_{j}(f^{(L)}_j(\sig{in}), \sig{lb}_j)$.
\end{lemma}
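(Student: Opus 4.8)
The plan is to derive this as a direct corollary of Lemma~\ref{lemma.symbolic.loss} together with the singleton observation stated in the paragraph immediately preceding the lemma. The right-hand side of Lemma~\ref{lemma.symbolic.loss} expresses the symbolic loss purely in terms of the two end-points $L_j$ and $U_j$ via the interval-tolerance error $e^{\Delta}_j$. So if I can argue that at $\kappa = 0$ the output bound degenerates to $L_j = U_j = f^{(L)}_j(\sig{in})$, the claim collapses to an averaging of two identical terms and the result follows by substitution. No new machinery is needed beyond unfolding Definition~\ref{def.perturbation}.

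First I would instantiate Definition~\ref{def.perturbation} at $\kappa = 0$. The admissible perturbed feature vectors range over $[f^{(\tilde{l}-1)}(\sig{in}) \oplus -\kappa, \; f^{(\tilde{l}-1)}(\sig{in}) \oplus \kappa]$, and at $\kappa = 0$ both endpoints of this interval coincide with $f^{(\tilde{l}-1)}(\sig{in})$ in every dimension, so the admissible set is the singleton $\{f^{(\tilde{l}-1)}(\sig{in})\}$. The only valid choice is $\sig{fv} = f^{(\tilde{l}-1)}(\sig{in})$, and feeding this through $g^{(L)}(g^{(L-1)}(\ldots g^{(\tilde{l})}(\sig{fv})))$ reproduces exactly $f^{(L)}(\sig{in})$ by the functional-composition definition of $f^{(L)}$. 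Hence the set of possible predicted values in dimension $j$ is the single value $f^{(L)}_j(\sig{in})$, and the tightest over-approximating bound is $[L_j, U_j] = [f^{(L)}_j(\sig{in}), f^{(L)}_j(\sig{in})]$, giving $L_j = U_j = f^{(L)}_j(\sig{in})$.

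Substituting $L_j = U_j = f^{(L)}_j(\sig{in})$ into the statement of Lemma~\ref{lemma.symbolic.loss} then yields
\[
e^{\langle \Delta, \tilde{l}, \kappa \rangle}_{j}([L, U], \sig{lb}) = \tfrac{1}{2}\bigl(e^{\Delta}_{j}(f^{(L)}_j(\sig{in}), \sig{lb}_j) + e^{\Delta}_{j}(f^{(L)}_j(\sig{in}), \sig{lb}_j)\bigr) = e^{\Delta}_{j}(f^{(L)}_j(\sig{in}), \sig{lb}_j),
\]
which is exactly the asserted identity. The one point requiring genuine care — and the reason the statement is hedged with ``can be'' rather than ``is'' — is that Definition~\ref{def.perturbation} only requires $[L_j, U_j]$ to be an \emph{over-approximation} of the reachable outputs; a loose analysis could still return a non-degenerate interval even at $\kappa = 0$. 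Thus the equality holds precisely when one uses the tightest admissible bound (the single point), as the preceding paragraph explicitly assumes. I would state this assumption explicitly so that the corollary is unambiguous; beyond that, the argument is a straightforward two-step chaining and presents no real obstacle.
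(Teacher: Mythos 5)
Your proposal is correct and follows exactly the route the paper itself takes: the paper's justification is the paragraph preceding the lemma, which observes that at $\kappa=0$ the output bound can be taken as the single point $f^{(L)}(\sig{in})$ and then invokes Lemma~\ref{lemma.symbolic.loss} to collapse the average of two identical terms. Your additional remark that the equality depends on choosing the tightest over-approximating bound (explaining the ``can be'' hedge) is a faithful and slightly more careful rendering of the same argument.
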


\vspace{1mm}

Results from Lemma~\ref{lemma.equi.to.mse}, \ref{lemma.symbolic.loss} and \ref{lemma.simplification.again} altogether offer a pragmatic method for training. First, one can train a network using the loss function MSE; based on the chained rule of Lemma~\ref{lemma.equi.to.mse} and Lemma~\ref{lemma.symbolic.loss}, using MSE loss is equivalent to the special case where $\Delta = 0$ and $\kappa = 0$. Subsequently, one can train the network with interval loss; it is equivalent to the special case where~$\kappa=0$. Finally, one enlarges the value of~$\kappa$ towards provably robust training. 

\vspace{2mm}

Finally, we summarize the theoretical guarantee that the new training approach provides. Intuitively, the below lemma states that if there exists an input $\sig{in}'$ (not necessarily contained in the training data) whose feature vector is sufficiently close to the feature vector of an existing input $\sig{in}$, then the output of the network under~$\sig{in}'$ will be close to the output of the network under~$\sig{in}$. 

\vspace{1mm}
\begin{lemma}[Theoretical guarantee on provable training]\label{lemma.provable.robustness} Given a neural network 
	$\mathcal{N}$ and $\mathcal{D}_{train} := \{ (\sig{in}, \sig{lb})\}$ be the training data set, if  $E^{(\Delta, \tilde{l},\kappa)}_{train} = 0$, then for every input $\sig{in'}$, if exists an input training data $(\sig{in},\sig{lb}) \in \mathcal{D}_{train}$ such that $f^{(\tilde{l} - 1)}(\sig{in'}) \in [f^{(\tilde{l} - 1)}(\sig{in}) \oplus  (-\kappa), f^{(\tilde{l} - 1)}(\sig{in}) \oplus + \kappa]$, then $f^{(L)}(\sig{in'}) \in [ \sig{lb} \oplus (-\Delta), \sig{lb} \oplus \Delta]$. 
	
\end{lemma}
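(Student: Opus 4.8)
The plan is to exploit the fact that a vanishing total loss forces every individual summand to vanish, and then to trace this back through the earlier definitions into a pure interval-containment statement. First I would observe that, by Definition~\ref{def.symbolic.tolerance.error}, $E^{(\Delta, \tilde{l}, \kappa)}_{train}$ is a finite sum over all training pairs $(\sig{in}, \sig{lb}) \in \mathcal{D}_{train}$ and all output indices $j$ of the nonnegative quantities $(e^{\langle \Delta, \tilde{l}, \kappa \rangle}_{j}([L,U], \sig{lb}))^2$. Hence $E^{(\Delta,\tilde l,\kappa)}_{train}=0$ is equivalent to $e^{\langle \Delta, \tilde{l}, \kappa \rangle}_{j}([L,U], \sig{lb}) = 0$ for every training pair and every $j$, where $[L,U]$ is the output bound of $\sig{in}$ under $(\tilde l,\kappa)$-perturbation given by Definition~\ref{def.perturbation}.

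Next I would apply Lemma~\ref{lemma.symbolic.loss} to rewrite each per-dimension symbolic loss as $\frac{1}{2}\bigl(e^{\Delta}_j(L_j,\sig{lb}_j)+e^{\Delta}_j(U_j,\sig{lb}_j)\bigr)$. Since each of the two interval-tolerance errors is itself nonnegative, a zero average forces both to vanish, so $e^{\Delta}_j(L_j,\sig{lb}_j)=0$ and $e^{\Delta}_j(U_j,\sig{lb}_j)=0$. Unfolding the definition of $e^{\Delta}_j(\cdot)$, a value of zero occurs exactly when the first (if) branch applies, i.e. when $|L_j-\sig{lb}_j|\le\Delta_j$ and $|U_j-\sig{lb}_j|\le\Delta_j$; equivalently $L_j,U_j\in[\sig{lb}_j-\Delta_j,\sig{lb}_j+\Delta_j]$. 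Because the tolerance set is an interval (hence convex) and $[L_j,U_j]$ is exactly the interval spanned by its two endpoints, I conclude $[L_j,U_j]\subseteq[\sig{lb}_j-\Delta_j,\sig{lb}_j+\Delta_j]$ for every $j$.

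It then remains to connect the new input $\sig{in'}$ to this bound. The key observation is that the prediction $f^{(L)}(\sig{in'})$ factors through layer $\tilde l$: setting $\sig{fv}:=f^{(\tilde l-1)}(\sig{in'})$, we have $f^{(L)}_j(\sig{in'})=g^{(L)}_j(g^{(L-1)}(\ldots g^{(\tilde l)}(\sig{fv})))$. By hypothesis $\sig{fv}\in[f^{(\tilde l-1)}(\sig{in})\oplus(-\kappa),\,f^{(\tilde l-1)}(\sig{in})\oplus\kappa]$, so $\sig{fv}$ is precisely an admissible perturbed feature vector in the sense of Definition~\ref{def.perturbation}. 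That definition guarantees the resulting output lies inside the over-approximating bound, giving $f^{(L)}_j(\sig{in'})\in[L_j,U_j]$. Chaining with the containment from the previous step yields $f^{(L)}_j(\sig{in'})\in[\sig{lb}_j-\Delta_j,\sig{lb}_j+\Delta_j]$, and since this holds for every $j$ the vector statement $f^{(L)}(\sig{in'})\in[\sig{lb}\oplus(-\Delta),\sig{lb}\oplus\Delta]$ follows.

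The main subtlety is not any calculation but the directionality of Definition~\ref{def.perturbation}: it asserts only that every perturbed output lies within $[L,U]$, and not the converse, so I must invoke it in the sound direction (admissible feature vector $\Rightarrow$ output in the bound) and must ensure that the very same bound $[L,U]$ certifying zero loss on $\sig{in}$ is the one applied to $\sig{in'}$. This is legitimate because $[L,U]$ depends only on $\sig{in}$, $\tilde l$, and $\kappa$, and is entirely independent of $\sig{in'}$; the guarantee is therefore stated relative to whichever over-approximation was used during training.
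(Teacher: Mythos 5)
Your proposal is correct and follows essentially the same route as the paper's own proof: zero total loss forces each per-dimension symbolic loss to vanish, which places $[L_j,U_j]$ inside the tolerance interval, and then the soundness direction of Definition~\ref{def.perturbation} puts $f^{(L)}(\sig{in'})$ inside $[L,U]$ because its layer-$(\tilde l-1)$ feature vector is an admissible perturbation of that of $\sig{in}$. The only (harmless) difference is that you justify the containment $L_j,U_j\in[\sig{lb}_j-\Delta_j,\sig{lb}_j+\Delta_j]$ by routing through Lemma~\ref{lemma.symbolic.loss} and the nonnegativity of the two endpoint losses, whereas the paper reads it off directly from the \texttt{overflow} definition; your variant is, if anything, slightly more airtight.
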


\begin{proof}
	When $E^{(\Delta, \tilde{l},\kappa)}_{train} = 0$, from Definition~\ref{def.symbolic.tolerance.error} one knows that for every input data $(\sig{in}, \sig{lb}) \in \mathcal{D}_{train}$, the corresponding $e^{(\Delta, \tilde{l}, \kappa)}_{j}([L, U], \sig{lb})=0$. This implies that the output lower-bound~$L_j$ and upper-bound~$U_j$, computed using Definition~\ref{def.symbolic.loss} with input~$\sig{in}$,  are contained in~$[\sig{lb}_j - \Delta_j, \sig{lb}_j + \Delta_j]$. 
	
	In Definition~\ref{def.symbolic.loss}, the computation of~$L_j$ and~$U_j$ considers every point in $[f^{(\tilde{l} - 1)}(\sig{in}) \oplus  (-\kappa), f^{(\tilde{l} - 1)}(\sig{in}) \oplus + \kappa]$. Therefore, so long as $f^{(\tilde{l} - 1)}(\sig{in'}) \in [f^{(\tilde{l} - 1)}(\sig{in}) \oplus  (-\kappa), f^{(\tilde{l} - 1)}(\sig{in}) \oplus + \kappa]$, the output of the neural network under~$\sig{in'}$ should be within $[L, U]$, thereby within $[ \sig{lb} \oplus (-\Delta), \sig{lb} \oplus \Delta]$.
\end{proof}

\vspace{2mm}
As a consequence, if one perturbs a data point~$\sig{in}$ in the training set to~$\sig{in}'$, so long as the perturbed input has produced similar high-level  feature vectors at layer~$\tilde{l}-1$, the output under perturbation is provably guaranteed to fall into the tolerance interval.

\section{Experiment}~\label{sec.evaluation}

To understand the proposed concept in a realistic setup, we engineered a direct perception network for identifying the center of the ego lane in $x$-position, by considering a fixed height~($y=500$) in pixel coordinates\footnote{It is possible to train a network to produce multiple affordances. Nevertheless, in the evaluation, our decision to only produce one affordance is to clearly understand the impact of the methodology for robustness.}. For repeatability purposes, we take the publicly available TuSimple dataset for lane detection\footnote{TuSimple data set is available at: \url{https://github.com/TuSimple/tusimple-benchmark/wiki}} and create labels from its associated ground truth.

\subsection{Creating data for experimenting direct perception}

In the TuSimple lane detection dataset,  labels for lane markings contain three parts:
\begin{itemize}
	\item $L_y$ containing a list of~$y$ coordinates that are used to represent a lane.
	\item A list of lanes $\mathcal{L}_1, \mathcal{L}_2\ldots$,  where for each lane $\mathcal{L}_i$, it stores a list of~$x$ coordinates. 
	\item The corresponding image raw file. 
\end{itemize}
Therefore, for the $i$-th lane in an image, its lane markings are  $(L_i[0], L_y[0]), (L_i[1], L_y[1]), (L_i[2], L_y[2])$, and so on. The lanes are mostly ordered from left to right, with some exceptions (e.g., files \texttt{clips/0313-1/21180/20.jpg} and \texttt{clips/0313-2/550/20.jpg}) where one needs to manually reorder the lanes. 

We created a script to automatically generate affordance labels for our experiment: First, fix the height to be $500$, followed by finding two adjacent lanes where (1) the first lane marking is on the left side of the image, and (2) the second lane marking is on the right side. If the script cannot find such two lanes, and the script  just omits the data as it requires manual labelling. Subsequently, we take the average $x$-position of two such lanes to be the center of the ego lane (i.e., the label). Therefore, every output label is an integer between~$0$ and~$720$. See Figure~\ref{fig:center.of.the.lane} for the ground truth of the lane marking and the generated center-of-ego-lane position (small green dot). Furthermore, we duplicate images whose  created labels are far ($\geq 100$ pixels) from the center of the image, to highlight the importance of rare events and to compensate the problem of not having enough labelled data. 

\begin{figure}
	\includegraphics[width=\columnwidth]{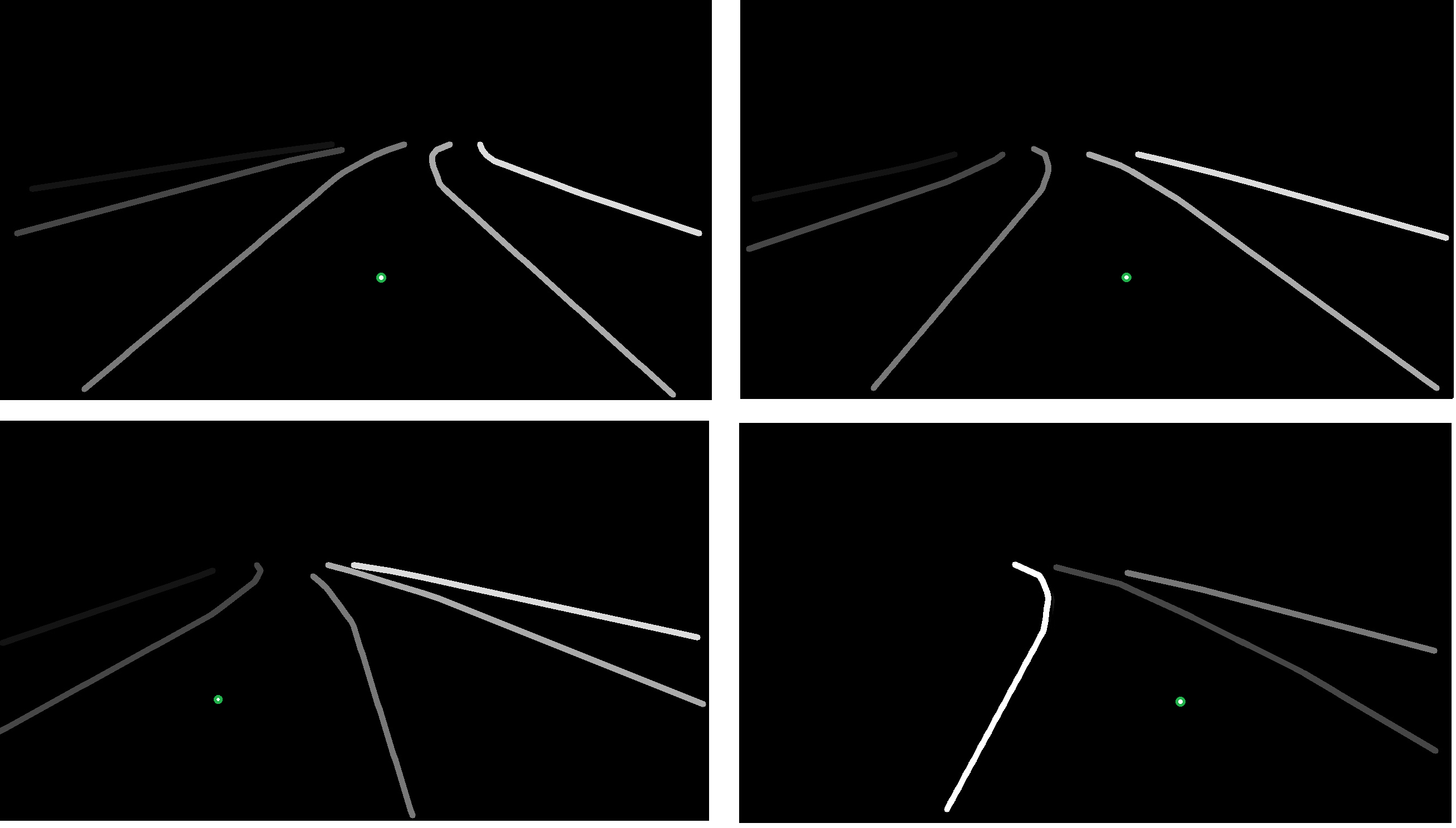}
	\centering
	\caption{Creating labels for the center of the ego-lane (green dot), from labels of lane markings (lines in the images) in the TuSimple data set. }
	\label{fig:center.of.the.lane}
\end{figure}  

For an image in the TuSimple dataset, it has a size of $1280\times720$. We crop the $y$-direction to keep only pixels with indices in range~$[208, 720)$, as the cropped elements are largely sky and cloud. Subsequently, resize the image by~$\frac{1}{4}$ and make it grayscale. This ultimately creates, for each input image, a tensor of dimension $320\times128\times1$ (in TensorFlow, the shape of the tensor equals~$(128,320,1)$). We also perform simple normalization using $t(v) = \frac{2}{255}v - 1$ such that the value~$v$ of each pixel, originally in the range $[0, 255]$, is now in the interval~$[-1, 1]$.

\subsection{Evaluation}

\begin{figure}
	\includegraphics[width=0.35\columnwidth,angle=90]{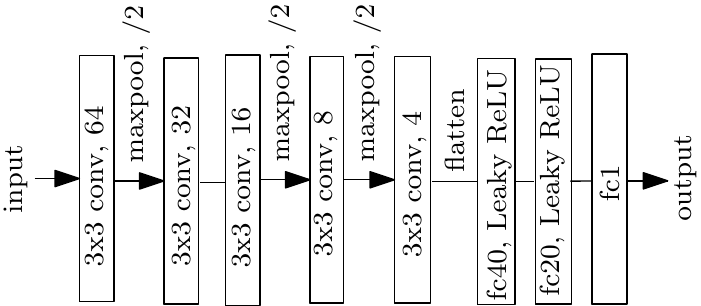}
	\centering
	\caption{The network architecture used in evaluation.}
	\label{fig:network.architecture}
\end{figure}  

In our experiment, we use a network architecture similar to the one shown in  Figure~\ref{fig:network.architecture}. As a baseline, we train~$20$ networks using Xavier weight initialization~\cite{glorot2010understanding}, with each network starting with a unique random seed between~$0$ and~$19$. This is for repeatability purposes (via fixing random seeds) and for eliminating  manual knowledge bias in summarizing our findings (via training many models).  The training uses the Adaptive Moment Estimation (Adam) optimization algorithm~\cite{kingma2014adam} with learning rate~$\alpha=0.01$ for~$20$ epochs and subsequently,~$\alpha=0.001$ for another~$10$ epochs. Finally, we take the best~$6$ performed models and further apply robust training techniques, by using Adam optimization algorithm with $\alpha=0.001$ for yet another~$10$ epochs. In terms of average-case performance, the baseline model and the model further trained with robust loss have similar performance.  

We use the single-step (non-iterative) fast gradient sign method (FGSM)~\cite{szegedy2013intriguing} as the baseline perturbation technique to understand the effect of applying robust training. Precisely, we compare the minimum step size~$\epsilon$ to make an originally perfect prediction (both for the standard network and the network further trained using robust loss) deviate with~$80$ pixels. As we use the single-step method, the parameter~$\epsilon$ is directly related to the intensity of perturbation.
Figure~\ref{fig:bar.chart} shows the overall summary on each baseline model and its further (robustly) trained model. 
In some models (such as model~2), further training does not lead to significant improvement, as for these networks, the minimum~$\epsilon$ values for enabling successful perturbations are largely similar. Nevertheless, for other models such as model~1 or model~6, a huge portion of the images require larger~$\epsilon$ in the robustly trained model, in order to successfully create the adversarial effect. Figure~\ref{fig:symbolic.loss.experiment} details the required~$\epsilon$ value for successful attacks in model~1, where each image is a point in the coordinate plane. One immediately observes that the majority of the points are located at the top-left of the coordinate plane, i.e., one requires larger amount of perturbation for models under robust training to reach the desired effect. 

Although our initial evaluation has hinted promises, it is important to understand that a more systematic analysis, such as evaluating the technique on multiple data sets and a deeper understanding over the parameter space of $\Delta$, $\kappa$, and $\tilde{l}$, is needed to make the technology truly useful. 

\begin{figure}[t]
	\centering
	\begin{tikzpicture}
	\begin{axis}[
	ybar stacked,
	ylabel={Percentage of all images being perturbed},
	legend style={nodes={scale=0.85, transform shape}},
	legend style={at={(0.4,-0.2)},
		anchor=north,legend columns=-1},
	symbolic x coords={model1, model2, model3, model4, 
		model5, model6},
	xtick=data,
	x tick label style={rotate=45,anchor=east},
	]
	\addplot+[ybar] plot coordinates {(model1, 69.1) (model2, 8.6) (model3, 54.1) (model4, 18.6) (model5, 37.1) (model6, 52.5)};
	\addplot+[ybar] plot coordinates  {(model1, 25.2) (model2, 67.6) (model3, 35.9) (model4, 55) (model5, 57.9) (model6, 44.8)};
	\addplot+[ybar] plot coordinates {(model1, 5.6) (model2, 23.8) (model3, 10) (model4, 26.4) (model5, 5) (model6, 2.7)};
	\legend{Robust training has larger $\epsilon$, roughly equal, MSE has larger $\epsilon$}
	\end{axis}
	\end{tikzpicture}    
	\caption{Comparing performance on perturbation, where "roughly equal" refers to the case where the difference of~$\epsilon$ values for two models are smaller than $0.05$.}
	\label{fig:bar.chart}
\end{figure}
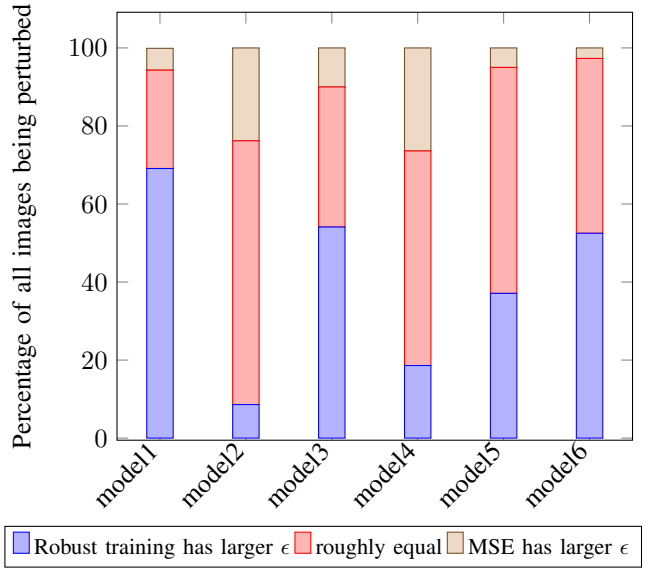

\begin{figure}[t]
	\centering
	\begin{tikzpicture}[thick,scale=0.9, every node/.style={scale=0.9}]
	\begin{axis}[%
	ylabel style={yshift=-0.3cm}, 
	xlabel style={xshift=0cm}, 
	xtick={0.2, 0.4, 0.6, 0.8, 1},
	xticklabels={$0.2$, $0.4$, $0.6$, $0.8$, $\geq 1$},
	xlabel={Required $\epsilon$ for successful FGSM attack (standard training)},
	xmax=1,
	ymax=1,
	ytick={0.2, 0.4, 0.6, 0.8, 1},
	yticklabels={$0.2$, $0.4$, $0.6$, $0.8$, $\geq 1$},
	ylabel={Required $\epsilon$ for successful FGSM attack (robust training)},
	scatter/classes={%
		a={mark=o,draw=black}}]
	\addplot[scatter,only marks,%
	scatter src=explicit symbolic]%
	table[meta=label] {
		x y label
		0.4423828125 0.5537109375 a 
		0.0654296875 0.6611328125 a 
		0.0478515625 0.3740234375 a 
		0.0478515625 0.1357421875 a 
		0.1689453125 0.3037109375 a 
		0.2470703125 0.3623046875 a 
		0.5224609375 0.6865234375 a 
		0.6669921875 0.9931640625 a 
		0.5966796875 0.9033203125 a 
		0.1962890625 0.3037109375 a 
		0.1943359375 0.2841796875 a 
		0.5439453125 0.9443359375 a 
		0.1318359375 0.8447265625 a 
		0.0615234375 0.1669921875 a 
		0.5498046875 0.9990234375 a 
		0.0634765625 0.0810546875 a 
		0.0712890625 0.1669921875 a 
		0.6064453125 0.7412109375 a 
		0.0654296875 0.1416015625 a 
		0.0361328125 0.6376953125 a 
		0.0322265625 0.4892578125 a 
		0.0302734375 0.3740234375 a 
		0.0263671875 0.0341796875 a 
		0.0419921875 0.0498046875 a 
		0.0419921875 0.2021484375 a 
		0.3544921875 0.5009765625 a 
		0.0439453125 0.0498046875 a 
		0.0498046875 0.0517578125 a 
		0.0615234375 0.0615234375 a 
		0.0419921875 0.9990234375 a 
		0.0439453125 0.9990234375 a 
		0.0458984375 0.0439453125 a 
		0.5751953125 0.7333984375 a 
		0.0556640625 0.0595703125 a 
		0.0693359375 0.9814453125 a 
		0.0654296875 0.0712890625 a 
		0.7392578125 0.9990234375 a 
		0.0224609375 0.3525390625 a 
		0.4599609375 0.5556640625 a 
		0.0361328125 0.2099609375 a 
		0.0439453125 0.8193359375 a 
		0.4716796875 0.5908203125 a 
		0.0400390625 0.0439453125 a 
		0.0341796875 0.0419921875 a 
		0.0341796875 0.0419921875 a 
		0.0224609375 0.0263671875 a 
		0.0419921875 0.0458984375 a 
		0.4267578125 0.5302734375 a 
		0.0419921875 0.0458984375 a 
		0.5283203125 0.7001953125 a 
		0.0322265625 0.0322265625 a 
		0.0595703125 0.9482421875 a 
		0.0361328125 0.0439453125 a 
		0.0517578125 0.5849609375 a 
		0.3994140625 0.3759765625 a 
		0.0634765625 0.3818359375 a 
		0.9990234375 0.9990234375 a 
		0.2451171875 0.3427734375 a 
		0.2236328125 0.6923828125 a 
		0.0810546875 0.4560546875 a 
		0.1611328125 0.6083984375 a 
		0.0595703125 0.6552734375 a 
		0.6357421875 0.0634765625 a 
		0.0556640625 0.9990234375 a 
		0.6494140625 0.9189453125 a 
		0.0576171875 0.0712890625 a 
		0.0478515625 0.0517578125 a 
		0.1962890625 0.3037109375 a 
		0.0615234375 0.0771484375 a 
		0.0380859375 0.0361328125 a 
		0.0595703125 0.0634765625 a 
		0.0498046875 0.1513671875 a 
		0.4189453125 0.5556640625 a 
		0.3681640625 0.5048828125 a 
		0.3505859375 0.4677734375 a 
		0.0693359375 0.7099609375 a 
		0.0810546875 0.6318359375 a 
		0.1533203125 0.2783203125 a 
		0.3115234375 0.3896484375 a 
		0.0634765625 0.6142578125 a 
		0.0712890625 0.5869140625 a 
		0.1455078125 0.8955078125 a 
		0.0439453125 0.6455078125 a 
		0.2666015625 0.3955078125 a 
		0.1904296875 0.2607421875 a 
		0.0283203125 0.0283203125 a 
		0.4423828125 0.5615234375 a 
		0.0283203125 0.0302734375 a 
		0.0439453125 0.0458984375 a 
		0.1533203125 0.8798828125 a 
		0.0439453125 0.0498046875 a 
		0.1982421875 0.3154296875 a 
		0.4599609375 0.5673828125 a 
		0.0791015625 0.7529296875 a 
		0.6669921875 0.3115234375 a 
		0.1865234375 0.2509765625 a 
		0.0302734375 0.6884765625 a 
		0.1513671875 0.1845703125 a 
		0.0283203125 0.0302734375 a 
		0.5693359375 0.7958984375 a 
		0.6572265625 0.7880859375 a 
		0.9912109375 0.9990234375 a 
		0.0302734375 0.0361328125 a 
		0.6884765625 0.3916015625 a 
		0.0771484375 0.1748046875 a 
		0.1806640625 0.4404296875 a 
		0.0654296875 0.1337890625 a 
		0.0732421875 0.7099609375 a 
		0.2744140625 0.8701171875 a 
		0.1513671875 0.3193359375 a 
		0.7568359375 0.1572265625 a 
		0.9853515625 0.9990234375 a 
		0.0400390625 0.2177734375 a 
		0.3681640625 0.0732421875 a 
		0.5126953125 0.0732421875 a 
		0.0400390625 0.5927734375 a 
		0.1826171875 0.2451171875 a 
		0.2060546875 0.2529296875 a 
		0.6513671875 0.1630859375 a 
		0.7998046875 0.9990234375 a 
		0.0341796875 0.0380859375 a 
		0.0439453125 0.5927734375 a 
		0.0400390625 0.6083984375 a 
		0.0458984375 0.9384765625 a 
		0.8154296875 0.9326171875 a 
		0.0537109375 0.8759765625 a 
		0.0419921875 0.8916015625 a 
		0.2587890625 0.4775390625 a 
		0.0439453125 0.0537109375 a 
		0.0322265625 0.9990234375 a 
		0.0419921875 0.0478515625 a 
		0.1455078125 0.9794921875 a 
		0.0830078125 0.9990234375 a 
		0.0341796875 0.1650390625 a 
		0.0224609375 0.0224609375 a 
		0.1396484375 0.3330078125 a 
		0.0439453125 0.3388671875 a 
		0.0654296875 0.8349609375 a 
		0.0419921875 0.2607421875 a 
		0.0380859375 0.2041015625 a 
		0.4794921875 0.5654296875 a 
		0.0654296875 0.5810546875 a 
		0.0458984375 0.8388671875 a 
		0.0380859375 0.7919921875 a 
		0.3369140625 0.4501953125 a 
		0.0595703125 0.7490234375 a 
		0.0537109375 0.5185546875 a 
		0.5166015625 0.7958984375 a 
		0.0400390625 0.5947265625 a 
		0.0439453125 0.9189453125 a 
		0.1298828125 0.1533203125 a 
		0.7841796875 0.9560546875 a 
		0.0537109375 0.1806640625 a 
		0.0673828125 0.9130859375 a 
		0.2236328125 0.7783203125 a 
		0.0517578125 0.0615234375 a 
		0.0439453125 0.1435546875 a 
		0.0498046875 0.0517578125 a 
		0.0537109375 0.3388671875 a 
		0.0322265625 0.3212890625 a 
		0.3271484375 0.3857421875 a 
		0.7138671875 0.9541015625 a 
		0.3486328125 0.8369140625 a 
		0.8564453125 0.4345703125 a 
		0.1435546875 0.1689453125 a 
		0.0537109375 0.5341796875 a 
		0.2509765625 0.6083984375 a 
		0.0458984375 0.2021484375 a 
		0.7099609375 0.9990234375 a 
		0.0537109375 0.1982421875 a 
		0.0478515625 0.1396484375 a 
		0.5654296875 0.7021484375 a 
		0.0810546875 0.1630859375 a 
		0.4033203125 0.5458984375 a 
		0.0517578125 0.5654296875 a 
		0.2333984375 0.9990234375 a 
		0.2138671875 0.2763671875 a 
		0.3896484375 0.5693359375 a 
		0.9404296875 0.9990234375 a 
		0.6572265625 0.8740234375 a 
		0.0615234375 0.1806640625 a 
		0.0498046875 0.1962890625 a 
		0.0517578125 0.0673828125 a 
		0.0517578125 0.0634765625 a 
		0.1474609375 0.2177734375 a 
		0.0478515625 0.1728515625 a 
		0.4013671875 0.6220703125 a 
		0.0654296875 0.1904296875 a 
		0.3544921875 0.8779296875 a 
		0.2822265625 0.5927734375 a 
		0.0654296875 0.1669921875 a 
		0.2783203125 0.4697265625 a 
		0.4111328125 0.8798828125 a 
		0.4052734375 0.5146484375 a 
		0.3759765625 0.9951171875 a 
		0.2216796875 0.8076171875 a 
		0.0634765625 0.1455078125 a 
		0.1826171875 0.2060546875 a 
		0.4443359375 0.6005859375 a 
		0.1357421875 0.2099609375 a 
		0.2705078125 0.4482421875 a 
		0.3935546875 0.4814453125 a 
		0.7900390625 0.9365234375 a 
		0.0419921875 0.2021484375 a 
		0.1630859375 0.2236328125 a 
		0.4208984375 0.5380859375 a 
		0.0517578125 0.4873046875 a 
		0.4833984375 0.6220703125 a 
		0.8291015625 0.9755859375 a 
		0.0322265625 0.1767578125 a 
		0.3408203125 0.3974609375 a 
		0.5986328125 0.8388671875 a 
		0.8173828125 0.9990234375 a 
		0.0654296875 0.5849609375 a 
		0.4521484375 0.5830078125 a 
		0.0537109375 0.6845703125 a 
		0.0498046875 0.0517578125 a 
		0.5439453125 0.2392578125 a 
		0.0439453125 0.5419921875 a 
		0.7021484375 0.9208984375 a 
		0.0537109375 0.0595703125 a 
		0.2451171875 0.8544921875 a 
		0.0478515625 0.3642578125 a 
		0.1337890625 0.6904296875 a 
		0.5361328125 0.7373046875 a 
		0.6181640625 0.8291015625 a 
		0.0341796875 0.7529296875 a 
		0.1337890625 0.2216796875 a 
		0.0537109375 0.0673828125 a 
		0.0634765625 0.7333984375 a 
		0.6083984375 0.7626953125 a 
		0.0419921875 0.0419921875 a 
		0.0595703125 0.9990234375 a 
		0.8642578125 0.9990234375 a 
		0.0693359375 0.0830078125 a 
		0.1416015625 0.5244140625 a 
		0.2705078125 0.4677734375 a 
		0.0478515625 0.2099609375 a 
		0.2314453125 0.3154296875 a 
		0.0712890625 0.1806640625 a 
		0.0576171875 0.6298828125 a 
		0.0576171875 0.1533203125 a 
		0.5322265625 0.2490234375 a 
		0.2607421875 0.3154296875 a 
		0.1591796875 0.5830078125 a 
		0.4248046875 0.6474609375 a 
		0.4716796875 0.7314453125 a 
		0.3408203125 0.4873046875 a 
		0.7021484375 0.9658203125 a 
		0.0380859375 0.0400390625 a 
		0.0478515625 0.1806640625 a 
		0.0361328125 0.3779296875 a 
		0.2548828125 0.3427734375 a 
		0.2138671875 0.3408203125 a 
		0.5517578125 0.7197265625 a 
		0.1318359375 0.2255859375 a 
		0.5732421875 0.6865234375 a 
		0.0556640625 0.5283203125 a 
		0.0400390625 0.2001953125 a 
		0.0576171875 0.1708984375 a 
		0.1435546875 0.2236328125 a 
		0.0302734375 0.1923828125 a 
		0.2548828125 0.3486328125 a 
		0.1904296875 0.7724609375 a 
		0.1357421875 0.2197265625 a 
		0.0732421875 0.5146484375 a 
		0.3935546875 0.4775390625 a 
		0.0419921875 0.0439453125 a 
		0.2783203125 0.0576171875 a 
		0.1806640625 0.8076171875 a 
		0.0732421875 0.6689453125 a 
		0.0478515625 0.1845703125 a 
		0.2158203125 0.6103515625 a 
		0.1318359375 0.2568359375 a 
		0.5908203125 0.2255859375 a 
		0.2880859375 0.4912109375 a 
		0.6513671875 0.2900390625 a 
		0.2783203125 0.5517578125 a 
		0.2177734375 0.3056640625 a 
		0.0458984375 0.0537109375 a 
		0.2919921875 0.7958984375 a 
		0.3037109375 0.8154296875 a 
		0.5283203125 0.7197265625 a 
		0.0458984375 0.7841796875 a 
		0.0556640625 0.2451171875 a 
		0.4716796875 0.6083984375 a 
		0.6572265625 0.8408203125 a 
		0.4306640625 0.7158203125 a 
		0.1943359375 0.2685546875 a 
		0.0556640625 0.5732421875 a 
		0.5341796875 0.6826171875 a 
		0.0517578125 0.5458984375 a 
		0.1357421875 0.2099609375 a 
		0.3916015625 0.5654296875 a 
		0.0498046875 0.0498046875 a 
		0.8037109375 0.2861328125 a 
		0.2255859375 0.7431640625 a 
		0.7412109375 0.9775390625 a 
		0.4677734375 0.9990234375 a 
		0.1416015625 0.2197265625 a 
		0.5068359375 0.8525390625 a 
		0.1572265625 0.2470703125 a 
		0.0966796875 0.1923828125 a 
		0.3525390625 0.5185546875 a 
		0.1669921875 0.2666015625 a 
		0.4716796875 0.5693359375 a 
		0.4736328125 0.5986328125 a 
		0.3544921875 0.7294921875 a 
		0.0556640625 0.7509765625 a 
		0.0576171875 0.0732421875 a 
		0.2041015625 0.2763671875 a 
		0.3369140625 0.6904296875 a 
		0.4833984375 0.6025390625 a 
		0.0322265625 0.0439453125 a 
		0.0498046875 0.0634765625 a 
		0.1337890625 0.1865234375 a 
		0.3583984375 0.4541015625 a 
		0.0458984375 0.5087890625 a 
		0.1865234375 0.5068359375 a 
		0.0498046875 0.1630859375 a 
		0.1787109375 0.2607421875 a 
		0.7880859375 0.9990234375 a 
		0.1298828125 0.5419921875 a 
		0.2236328125 0.7744140625 a 
		0.2763671875 0.7236328125 a 
		0.1416015625 0.2392578125 a 
		0.2744140625 0.6806640625 a 
		0.0478515625 0.0595703125 a 
		0.0439453125 0.1650390625 a 
		0.5029296875 0.6201171875 a 
		0.0458984375 0.0556640625 a 
		0.1748046875 0.2373046875 a 
		0.2509765625 0.4345703125 a 
		0.0908203125 0.4130859375 a 
		0.8759765625 0.2392578125 a 
		0.0439453125 0.2001953125 a 
		0.2705078125 0.4267578125 a 
		0.0673828125 0.4833984375 a 
		0.1416015625 0.2216796875 a 
		0.0615234375 0.5693359375 a 
		0.2724609375 0.5224609375 a 
		0.1005859375 0.7939453125 a 
		0.2451171875 0.3251953125 a 
		0.2177734375 0.5126953125 a 
		0.1904296875 0.2255859375 a 
		0.1494140625 0.2568359375 a 
		0.2099609375 0.2626953125 a 
		0.0361328125 0.2294921875 a 
		0.1337890625 0.2822265625 a 
		0.0556640625 0.0751953125 a 
		0.0478515625 0.6162109375 a 
		0.0556640625 0.7158203125 a 
		0.7060546875 0.2685546875 a 
		0.0478515625 0.9560546875 a 
		0.4443359375 0.6064453125 a 
		0.0419921875 0.5576171875 a 
		0.8134765625 0.2841796875 a 
		0.0537109375 0.1708984375 a 
		0.2041015625 0.3134765625 a 
		0.0400390625 0.1826171875 a 
		0.6318359375 0.7958984375 a 
		0.0419921875 0.1376953125 a 
		0.5556640625 0.6572265625 a 
		0.0478515625 0.9111328125 a 
		0.0576171875 0.0693359375 a 
		0.1767578125 0.2353515625 a 
		0.7744140625 0.9306640625 a 
		0.2490234375 0.3115234375 a 
		0.7197265625 0.2861328125 a 
		0.0556640625 0.8447265625 a 
		0.0478515625 0.1474609375 a 
		0.5107421875 0.6123046875 a 
		0.2216796875 0.6513671875 a 
		0.2177734375 0.3154296875 a 
		0.0478515625 0.1494140625 a 
		0.0224609375 0.1376953125 a 
		0.1435546875 0.2021484375 a 
		0.2900390625 0.3759765625 a 
		0.0673828125 0.6572265625 a 
		0.0576171875 0.2705078125 a 
		0.0576171875 0.0693359375 a 
		0.2314453125 0.3134765625 a 
		0.1494140625 0.2099609375 a 
		0.0537109375 0.0595703125 a 
		0.1806640625 0.7177734375 a 
		0.0537109375 0.5556640625 a 
		0.1318359375 0.1630859375 a 
		0.1318359375 0.1669921875 a 
		0.0673828125 0.9677734375 a 
		0.0498046875 0.8212890625 a 
		0.0458984375 0.0576171875 a 
		0.5068359375 0.8232421875 a 
		0.2392578125 0.2978515625 a 
		0.0595703125 0.5908203125 a 
		0.0634765625 0.4970703125 a 
		0.0458984375 0.1982421875 a 
		0.2763671875 0.4970703125 a 
		0.0517578125 0.6005859375 a 
		0.1591796875 0.2705078125 a 
		0.6962890625 0.2353515625 a 
		0.0361328125 0.0341796875 a 
		0.5361328125 0.2099609375 a 
		0.2021484375 0.4169921875 a 
		0.2236328125 0.6083984375 a 
		0.9755859375 0.3427734375 a 
		0.0693359375 0.5908203125 a 
		0.0517578125 0.0576171875 a 
		0.0517578125 0.2353515625 a 
		0.0693359375 0.2490234375 a 
		0.0380859375 0.7978515625 a 
		0.7080078125 0.2275390625 a 
		0.3017578125 0.3974609375 a 
		0.4697265625 0.7705078125 a 
		0.0517578125 0.3583984375 a 
		0.0556640625 0.2255859375 a 
		0.2451171875 0.9990234375 a 
		0.0732421875 0.4716796875 a 
		0.0595703125 0.1630859375 a 
		0.0634765625 0.1689453125 a 
		0.0478515625 0.3798828125 a 
		0.3525390625 0.4541015625 a 
		0.2216796875 0.4013671875 a 
		0.2841796875 0.3408203125 a 
		0.0615234375 0.5615234375 a 
		0.2197265625 0.3056640625 a 
		0.0498046875 0.1708984375 a 
		0.2880859375 0.7724609375 a 
		0.1572265625 0.2001953125 a 
		0.9990234375 0.3447265625 a 
		0.2236328125 0.4482421875 a 
		0.4228515625 0.5693359375 a 
		0.3642578125 0.1552734375 a 
		0.4287109375 0.5283203125 a 
		0.0556640625 0.3798828125 a 
		0.0615234375 0.1552734375 a 
		0.7275390625 0.9990234375 a 
		0.2041015625 0.2529296875 a 
		0.1416015625 0.2197265625 a 
		0.0556640625 0.1630859375 a 
		0.0283203125 0.5595703125 a 
		0.1337890625 0.6064453125 a 
		0.5498046875 0.7158203125 a 
		0.5439453125 0.8056640625 a 
		0.1865234375 0.2490234375 a 
		0.0419921875 0.5576171875 a 
		0.0478515625 0.0595703125 a 
		0.0380859375 0.0400390625 a 
		0.0419921875 0.0478515625 a 
		0.2744140625 0.6904296875 a 
		0.6513671875 0.7822265625 a 
		0.0458984375 0.0458984375 a 
		0.0439453125 0.0478515625 a 
		0.0341796875 0.1728515625 a 
		0.4951171875 0.6181640625 a 
		0.0419921875 0.0498046875 a 
		0.0302734375 0.6630859375 a 
		0.3740234375 0.4658203125 a 
		0.0439453125 0.9072265625 a 
		0.0458984375 0.0498046875 a 
		0.0400390625 0.0458984375 a 
		0.0341796875 0.7529296875 a 
		0.4189453125 0.5830078125 a 
		0.0361328125 0.0400390625 a 
		0.0478515625 0.5126953125 a 
		0.6220703125 0.0341796875 a 
		0.0283203125 0.0302734375 a 
		0.0498046875 0.0537109375 a 
		0.0400390625 0.3701171875 a 
		0.8154296875 0.0068359375 a 
		0.0439453125 0.1572265625 a 
		0.0263671875 0.0302734375 a 
		0.1845703125 0.2626953125 a 
		0.0400390625 0.1455078125 a 
		0.0380859375 0.0458984375 a 
		0.0830078125 0.4814453125 a 
		0.0322265625 0.0361328125 a 
		0.0341796875 0.3583984375 a 
		0.0693359375 0.1435546875 a 
		0.3466796875 0.4404296875 a 
		0.3974609375 0.5322265625 a 
		0.0244140625 0.0283203125 a 
		0.2060546875 0.2822265625 a 
		0.4111328125 0.5029296875 a 
		0.0283203125 0.0302734375 a 
		0.0576171875 0.1513671875 a 
		0.4033203125 0.4560546875 a 
		0.0400390625 0.0419921875 a 
		0.0400390625 0.0400390625 a 
		0.0380859375 0.0458984375 a 
		0.4150390625 0.5498046875 a 
		0.3544921875 0.5458984375 a 
		0.8232421875 0.2529296875 a 
		0.0341796875 0.0400390625 a 
		0.0361328125 0.7314453125 a 
		0.0322265625 0.5009765625 a 
		0.5322265625 0.5751953125 a 
		0.0537109375 0.0400390625 a 
		0.0380859375 0.0498046875 a 
		0.1962890625 0.2412109375 a 
		0.5498046875 0.6318359375 a 
		0.0380859375 0.7236328125 a 
		0.4814453125 0.5791015625 a 
		0.0341796875 0.0400390625 a 
		0.0361328125 0.0361328125 a 
		0.7470703125 0.2509765625 a 
		0.0322265625 0.0322265625 a 
		0.0283203125 0.2041015625 a 
		0.0322265625 0.1611328125 a 
		0.4208984375 0.5244140625 a 
		0.1513671875 0.6552734375 a 
		0.0361328125 0.2646484375 a 
		0.5087890625 0.6083984375 a 
		0.4404296875 0.0439453125 a 
		0.0498046875 0.0537109375 a 
		0.2861328125 0.3388671875 a 
		0.0400390625 0.0400390625 a 
		0.5908203125 0.1669921875 a 
		0.4560546875 0.6513671875 a 
		0.1630859375 0.2470703125 a 
		0.1826171875 0.9990234375 a 
		0.4052734375 0.5009765625 a 
		0.0654296875 0.1826171875 a 
		0.3408203125 0.1435546875 a 
		0.5478515625 0.6611328125 a 
		0.5751953125 0.7392578125 a 
		0.4248046875 0.6279296875 a 
		0.3779296875 0.4853515625 a 
		0.0419921875 0.3525390625 a 
		0.0537109375 0.8779296875 a 
		0.5185546875 0.7138671875 a 
		0.1494140625 0.5458984375 a 
		0.3740234375 0.4931640625 a 
		0.0361328125 0.0419921875 a 
		0.0400390625 0.0458984375 a 
		0.8291015625 0.9990234375 a 
		0.9150390625 0.4599609375 a 
		0.0400390625 0.0400390625 a 
		0.6611328125 0.7880859375 a 
		0.4150390625 0.1806640625 a 
		0.0380859375 0.0361328125 a 
		0.0322265625 0.3525390625 a 
		0.0458984375 0.0517578125 a 
		0.3916015625 0.4951171875 a 
		0.0283203125 0.0322265625 a 
		0.0380859375 0.0400390625 a 
		0.0361328125 0.0380859375 a 
		0.0283203125 0.0322265625 a 
		0.0615234375 0.4697265625 a 
		0.0419921875 0.0458984375 a 
		0.0341796875 0.7587890625 a 
		0.0322265625 0.0341796875 a 
		0.0380859375 0.0419921875 a 
		0.3447265625 0.4521484375 a 
		0.1650390625 0.7978515625 a 
		0.3037109375 0.3662109375 a 
		0.0517578125 0.5048828125 a 
		0.0439453125 0.5244140625 a 
		0.0439453125 0.0537109375 a 
		0.0400390625 0.0419921875 a 
		0.0458984375 0.5263671875 a 
		0.0400390625 0.0419921875 a 
		0.4033203125 0.4677734375 a 
		0.4501953125 0.5849609375 a 
		0.6787109375 0.2021484375 a 
		0.0458984375 0.0458984375 a 
		0.0302734375 0.4033203125 a 
		0.0400390625 0.4287109375 a 
		0.5224609375 0.6455078125 a 
		0.0478515625 0.1513671875 a 
		0.0419921875 0.0517578125 a 
		0.0400390625 0.5732421875 a 
		0.0244140625 0.5673828125 a 
		0.0361328125 0.0419921875 a 
		0.0419921875 0.0439453125 a 
		0.3134765625 0.3759765625 a 
		0.3017578125 0.3759765625 a 
		0.4248046875 0.5947265625 a 
		0.0478515625 0.0517578125 a 
		0.0458984375 0.4833984375 a 
		0.0400390625 0.0419921875 a 
		0.0517578125 0.0498046875 a 
		0.2529296875 0.9990234375 a 
		0.0380859375 0.4150390625 a 
		0.0439453125 0.9248046875 a 
		0.0341796875 0.3466796875 a 
		0.0341796875 0.6845703125 a 
		0.6083984375 0.7041015625 a 
		0.0361328125 0.3642578125 a 
		0.0458984375 0.5771484375 a 
		0.0380859375 0.0419921875 a 
		0.6728515625 0.9833984375 a 
		0.0400390625 0.4287109375 a 
		0.0283203125 0.4638671875 a 
		0.1376953125 0.1962890625 a 
		0.5673828125 0.6923828125 a 
		0.0302734375 0.0302734375 a 
		0.0400390625 0.0419921875 a 
		0.0419921875 0.0458984375 a 
		0.0576171875 0.1513671875 a 
		0.4267578125 0.4970703125 a 
		0.4404296875 0.5263671875 a 
		0.0361328125 0.0380859375 a 
		0.0380859375 0.0419921875 a 
		0.0341796875 0.0361328125 a 
		0.3330078125 0.5419921875 a 
		0.7412109375 0.9990234375 a 
		0.0458984375 0.6396484375 a 
		0.3779296875 0.5634765625 a 
		0.0361328125 0.0341796875 a 
		0.0458984375 0.7705078125 a 
		0.5166015625 0.6611328125 a 
		0.0439453125 0.4873046875 a 
		0.0322265625 0.0341796875 a 
		0.0341796875 0.0361328125 a 
		0.1533203125 0.9990234375 a 
		0.3759765625 0.5185546875 a 
		0.0458984375 0.0458984375 a 
		0.0341796875 0.3759765625 a 
		0.0400390625 0.9833984375 a 
		0.8076171875 0.1962890625 a 
		0.0498046875 0.0498046875 a 
		0.0283203125 0.0302734375 a 
		0.4755859375 0.1650390625 a 
		0.3916015625 0.5029296875 a 
		0.7841796875 0.9990234375 a 
		0.0439453125 0.1748046875 a 
		0.3564453125 0.4521484375 a 
		0.1630859375 0.7939453125 a 
		0.0322265625 0.3779296875 a 
		0.4013671875 0.5400390625 a 
		0.3154296875 0.4033203125 a 
		0.0341796875 0.0361328125 a 
		0.0361328125 0.6611328125 a 
		0.0322265625 0.0380859375 a 
		0.0537109375 0.0615234375 a 
		0.3232421875 0.3857421875 a 
		0.3896484375 0.4658203125 a 
		0.0478515625 0.1708984375 a 
		0.0244140625 0.2939453125 a 
		0.4345703125 0.4833984375 a 
		0.0439453125 0.0458984375 a 
		0.0419921875 0.0458984375 a 
		0.0341796875 0.0361328125 a 
		0.0302734375 0.1923828125 a 
		0.0439453125 0.0498046875 a 
		0.0458984375 0.0458984375 a 
		0.4814453125 0.6240234375 a 
		0.0458984375 0.1474609375 a 
		0.0458984375 0.0517578125 a 
		0.0341796875 0.0361328125 a 
		0.4794921875 0.5263671875 a 
		0.5751953125 0.6806640625 a 
		0.4208984375 0.6474609375 a 
		0.4912109375 0.5400390625 a 
		0.0322265625 0.3583984375 a 
		0.0517578125 0.1826171875 a 
		0.7587890625 0.9072265625 a 
		0.7490234375 0.9287109375 a 
		0.0400390625 0.4345703125 a 
		0.0380859375 0.3056640625 a 
		0.0478515625 0.3583984375 a 
		0.5322265625 0.7802734375 a 
		0.4775390625 0.6826171875 a 
		0.0400390625 0.0458984375 a 
		0.4130859375 0.5986328125 a 
		0.4794921875 0.6318359375 a 
		0.0478515625 0.6220703125 a 
		0.9111328125 0.0048828125 a 
		0.0283203125 0.9990234375 a 
		0.0400390625 0.9990234375 a 
		0.2353515625 0.9697265625 a 
		0.0400390625 0.0458984375 a 
		0.0400390625 0.1806640625 a 
		0.0419921875 0.7822265625 a 
		0.4208984375 0.4970703125 a 
		0.0400390625 0.0439453125 a 
		0.0302734375 0.0341796875 a 
		0.0341796875 0.7353515625 a 
		0.6748046875 0.7509765625 a 
		0.0517578125 0.0478515625 a 
		0.0322265625 0.2919921875 a 
		0.0419921875 0.6962890625 a 
		0.0400390625 0.0458984375 a 
		0.0458984375 0.4912109375 a 
		0.5810546875 0.9072265625 a 
		0.0302734375 0.1845703125 a 
		0.0302734375 0.0322265625 a 
		0.0439453125 0.5517578125 a 
		0.0537109375 0.6337890625 a 
		0.0439453125 0.0498046875 a 
		0.5048828125 0.0224609375 a 
		0.3759765625 0.4755859375 a 
		0.0322265625 0.7392578125 a 
		0.0517578125 0.5146484375 a 
		0.0400390625 0.4775390625 a 
		0.4619140625 0.6279296875 a 
		0.0478515625 0.0537109375 a 
		0.0439453125 0.5126953125 a 
		0.0458984375 0.0458984375 a 
		0.0556640625 0.1337890625 a 
		0.0458984375 0.3974609375 a 
		0.1455078125 0.7119140625 a 
		0.0361328125 0.4931640625 a 
		0.0322265625 0.0341796875 a 
		0.6533203125 0.7333984375 a 
		0.0361328125 0.0341796875 a 
		0.0400390625 0.3134765625 a 
		0.5380859375 0.7900390625 a 
		0.4130859375 0.5322265625 a 
		0.3505859375 0.0537109375 a 
		0.8544921875 0.9990234375 a 
		0.0458984375 0.1513671875 a 
		0.0205078125 0.2587890625 a 
		0.4951171875 0.5361328125 a 
		0.0478515625 0.0556640625 a 
		0.0341796875 0.3544921875 a 
		0.5185546875 0.6123046875 a 
		0.6884765625 0.0361328125 a 
		0.4052734375 0.4912109375 a 
		0.0439453125 0.0498046875 a 
		0.0380859375 0.3291015625 a 
		0.4130859375 0.5556640625 a 
		0.0380859375 0.0439453125 a 
		0.8466796875 0.2919921875 a 
		0.0341796875 0.0419921875 a 
		0.0458984375 0.0478515625 a 
		0.0361328125 0.7841796875 a 
		0.0283203125 0.0283203125 a 
		0.0263671875 0.0302734375 a 
		0.0263671875 0.0302734375 a 
		0.0361328125 0.0361328125 a 
		0.0380859375 0.0400390625 a 
		0.5146484375 0.6748046875 a 
		0.0322265625 0.5830078125 a 
		0.0400390625 0.4501953125 a 
		0.5224609375 0.6279296875 a 
		0.1513671875 0.2353515625 a 
		0.0439453125 0.6181640625 a 
		0.4833984375 0.2880859375 a 
		0.3955078125 0.5341796875 a 
		0.0302734375 0.0380859375 a 
		0.6044921875 0.7646484375 a 
		0.1318359375 0.2177734375 a 
		0.3369140625 0.3857421875 a 
		0.0478515625 0.1435546875 a 
		0.5810546875 0.0458984375 a 
		0.0576171875 0.0654296875 a 
		0.0322265625 0.5556640625 a 
		0.0498046875 0.0537109375 a 
		0.0361328125 0.0322265625 a 
		0.0439453125 0.5556640625 a 
		0.0322265625 0.5556640625 a 
		0.0458984375 0.0439453125 a 
		0.0322265625 0.4169921875 a 
		0.0419921875 0.0556640625 a 
		0.0341796875 0.8408203125 a 
		0.0615234375 0.7001953125 a 
		0.0361328125 0.1826171875 a 
		0.5478515625 0.6728515625 a 
		0.3740234375 0.0283203125 a 
		0.3818359375 0.4912109375 a 
		0.0537109375 0.1728515625 a 
		0.0517578125 0.0673828125 a 
		0.0439453125 0.0478515625 a 
		0.4462890625 0.4794921875 a 
		0.5263671875 0.6689453125 a 
		0.0478515625 0.1455078125 a 
		0.5087890625 0.0595703125 a 
		0.0361328125 0.2822265625 a 
		0.0380859375 0.5126953125 a 
		0.5439453125 0.6962890625 a 
		0.1826171875 0.3037109375 a 
		0.0419921875 0.2353515625 a 
		0.0322265625 0.1923828125 a 
		0.0439453125 0.0439453125 a 
		0.0478515625 0.0595703125 a 
		0.4013671875 0.4716796875 a 
		0.0419921875 0.1455078125 a 
		0.0361328125 0.0478515625 a 
		0.0380859375 0.0400390625 a 
		0.0361328125 0.0439453125 a 
		0.0380859375 0.0380859375 a 
		0.3896484375 0.4501953125 a 
		0.0478515625 0.6162109375 a 
		0.0478515625 0.0537109375 a 
		0.0380859375 0.6064453125 a 
		0.0439453125 0.1455078125 a 
		0.4970703125 0.2060546875 a 
		0.0322265625 0.0341796875 a 
		0.0478515625 0.0498046875 a 
		0.0556640625 0.0595703125 a 
		0.0263671875 0.0283203125 a 
		0.0517578125 0.1533203125 a 
		0.0458984375 0.7041015625 a 
		0.3505859375 0.4091796875 a 
		0.0380859375 0.0439453125 a 
		0.0322265625 0.1611328125 a 
		0.0400390625 0.0380859375 a 
		0.0361328125 0.0380859375 a 
		0.0400390625 0.0419921875 a 
		0.0478515625 0.0615234375 a 
		0.0419921875 0.5009765625 a 
		0.0419921875 0.0478515625 a 
		0.0302734375 0.0322265625 a 
		0.0263671875 0.0283203125 a 
		0.0458984375 0.5009765625 a 
		0.0361328125 0.5224609375 a 
		0.0400390625 0.6181640625 a 
		0.0224609375 0.0263671875 a 
		0.0439453125 0.0458984375 a 
		0.0478515625 0.0576171875 a 
		0.0478515625 0.0537109375 a 
		0.0341796875 0.0341796875 a 
		0.6943359375 0.2548828125 a 
		0.5263671875 0.5908203125 a 
		0.0439453125 0.0419921875 a 
		0.0478515625 0.3974609375 a 
		0.7060546875 0.9990234375 a 
		0.0380859375 0.0400390625 a 
		0.0478515625 0.7705078125 a 
		0.0302734375 0.2041015625 a 
		0.0341796875 0.0361328125 a 
		0.0537109375 0.1513671875 a 
		0.4873046875 0.5654296875 a 
		0.4072265625 0.5576171875 a 
		0.0478515625 0.1474609375 a 
		0.4521484375 0.5986328125 a 
		0.3564453125 0.3818359375 a 
		0.5712890625 0.6435546875 a 
		0.1748046875 0.7373046875 a 
		0.4658203125 0.5869140625 a 
		0.0244140625 0.4853515625 a 
		0.6044921875 0.7666015625 a 
		0.0498046875 0.1884765625 a 
		0.2470703125 0.3408203125 a 
		0.0361328125 0.0380859375 a 
		0.0322265625 0.0341796875 a 
	};
	\end{axis}
	\end{tikzpicture}
	\caption{Comparing models trained using standard (MSE) and symbolic loss approaches (for applying $\Delta = 10$ and $\kappa = 0.01$ on the layer \texttt{fc40} of Figure~\ref{fig:network.architecture}). }
	\label{fig:symbolic.loss.experiment}
\end{figure}
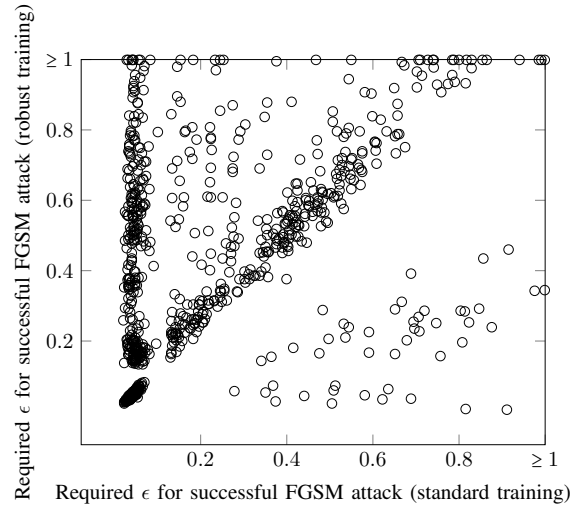

\section{Related Work}~\label{sec.related}

For engineering robust neural networks, the concept in this paper is highly related to the work of provably robust training~\cite{kolter2017provable,sinha2017certifiable,wang2018mixtrain,raghunathan2018certified,wong2018scaling,tsuzuku2018lipschitz,salman2019provably}. Compared to existing work, the key difference lies in two aspects: (i) Existing work focuses on classification, while we focus on output regression for learning affordances. This is made possible by considering an afore-specified tolerance interval. (ii) Existing work perturbs directly on the individual input channels (which is a special case of ours by setting $\tilde{l}=1$), while our definition of layer index~$\tilde{l}$ generally applies to close-to-output layers. This implies a feature-level perturbation rather than perturbation on individual bits (which makes existing methods hard to characterize global transformation such as image distortion). It also increases size of the network that can be trained: As symbolic bound propagation starts at the $\tilde{l}$-th layer, the execution time and the precision for bound propagation is indifferent to the depth of convolution layers before feature layers. In this work, we only use the boxed domain to compute the over-approximation, following the work of~\cite{cheng2017maximum}. One can also use more precise abstract-interpretation domains such as Zonotope~\cite{gehr2018ai2} with a price of increasing the training time, but pragmatically (due to the loss never approaching~$0$) the imprecision of boxed domain can be compensated by simply using smaller~$\kappa$ values.

The concept of provable robustness training by definition only establishes a proof on every data point used in training. Therefore, even for perfectly trained neural networks, it is still insufficient to argue quality assurance on data points in the Operating Design Domain (ODD) that are distant from the training data. It may be complemented by recent research attempts in testing, formal verification, or the introduction of systematic processes for dependable machine learning. Testing techniques, as demonstrated in recent research trend of finding adversarial examples (due to excessive results in this direction, we refer readers to a recent survey paper~\cite{akhtar2018threat} for existing approaches), largely use gradient-based search techniques to find small perturbation around original inputs that makes the output of neural network behave erroneously. It focuses on finding counter-examples rather than providing a guaranteed proof for the absence of undesired behavior. Some recently proposed test metrics aim to provide a pragmatic argument of completeness of testing~\cite{pei2017deepxplore,sun2018testing,sun2018concolic,DBLP:conf/atva/ChengHY18} mimicking the coverage criteria (e.g., line coverage) appeared in classical software testing techniques. Formal verification~\cite{pulina2010abstraction,DBLP:conf/cav/KatzBDJK17,DBLP:conf/cav/HuangKWW17,cheng2017maximum,ehlers2017formal,lomuscio2017approach,narodytska2018verifying,gehr2018ai2,dutta2018output,bunel2018unified,DBLP:conf/ijcai/RuanHK18,wang2018formal,weng2018towards,yang2019analyzing,huang2019reachnn,dvijotham2018dual} views the neural network as a mathematical object and performs symbolic analysis either via SMT~\cite{DBLP:conf/cav/KatzBDJK17,ehlers2017formal}, abstract interpretation~\cite{pulina2010abstraction,gehr2018ai2,DBLP:conf/ijcai/RuanHK18,yang2019analyzing}, MILP~\cite{cheng2017maximum,lomuscio2017approach}, or specialized search techniques~\cite{DBLP:conf/cav/HuangKWW17,dutta2018output,bunel2018unified,narodytska2018verifying,wang2018formal,weng2018towards}. Overall, formal verification offers strong promise on the absence undesired behaviors, but the scalability to very deep networks with inputs from high-dimensional pixel images remains limited.

\section{Concluding Remarks}~\label{sec.concluding.remarks}

In this paper, we considered the problem of engineering robust networks for direct perception, where regression rather than classification is used for output. We defined a loss function by integrating the practically-driven \emph{tolerance} concept, thereby guiding the training process with the goal of bringing predictions back to the tolerance interval rather than a stricter form of being close to labels. Extending the tolerance concept by integrating perturbation, we created conditions where one can ascertain robustness with provably guarantees. The proposed loss functions are proven to be generalizations of the MSE loss function commonly used in standard training approaches. 

The inability to create a systematic approach for engineering robust neural networks for perception systems is one of the most critical barriers towards safe automated driving. We believe that the approach suggested in this paper, in particular the tolerance-driven approach for machine learning, offers an initial step towards a rigorous and contract-driven methodology for the use of machine learning in the automotive domain. This is largely due to the alignment between the tolerance as regulated in the contract (specification) and the corresponding tolerance-integrated loss function (architecture design). Although the work is motivated by concrete problems in direct perception, the underlying technique can be applicable to other applications in automated driving where regression is used. 

For future work, we plan to apply similar concepts to other domains such as medical diagnosis, as well as developing analogous concepts for recurrent neural networks. Within the automotive domain, we also plan to bring our proposed contract-based approach for machine learning to standardization bodies, with the goal of revising existing autonomous driving safety standards such as ISO~21448\footnote{SOTIF: \url{https://www.iso.org/standard/70939.html}}.

\bibliographystyle{IEEEtran}

\end{document}